\newtheorem{theorem}{Theorem}
\newtheorem{corollary}{Corollary}
\newtheorem{lemma}{Lemma}
\algnewcommand{\LeftComment}[1]{ \(\triangleright\) #1}
\newcommand{\shir}[1]{{\color{red}#1}}
\newcommand{\shir}[1]{#1}
\newcommand{\shic}[1]{{\color{red}(Chengshuai: #1)}}
\newcommand{\shic}[1]{}
\newcommand{\congr}[1]{{\color{blue}#1}}
\newcommand{\congr}[1]{#1}
\newcommand{\congc}[1]{{\color{magenta}(Cong: #1)}}
\newcommand{\congc}[1]{}
\title{Federated Multi-Armed Bandits}
\author{
    %Anonymous Author(s)
    Chengshuai Shi,
    Cong Shen
    \\
    %Authors
    % All authors must be in the same font size and format.
    % Written by AAAI Press Staff\textsuperscript{\rm 1}\thanks{With help from the AAAI Publications Committee.}\\
    % AAAI Style Contributions by Pater Patel Schneider,
    % Sunil Issar,  \\
    % J. Scott Penberthy,
    % George Ferguson,
    % Hans Guesgen,
    % Francisco Cruz,
    % Marc Pujol-Gonzalez
    % \\
}
\title{My Publication Title --- Single Author}
\author {
    % Author
    Author Name \\
}
\title{My Publication Title --- Multiple Authors}
\author {
    % Authors

        First Author Name,\textsuperscript{\rm 1}
        Second Author Name, \textsuperscript{\rm 2}
        Third Author Name \textsuperscript{\rm 1} \\
}
\begin{document}
%\linenumbers
\maketitle

\begin{abstract}
Federated multi-armed bandits (FMAB) is a new bandit paradigm that parallels the federated learning (FL) framework in supervised learning. It is inspired by practical applications in cognitive radio and recommender systems, and enjoys features that are analogous to FL. This paper proposes a general framework of FMAB and then studies two specific federated bandit models. We first study the approximate model where the heterogeneous local models are random realizations of the global model from an unknown distribution. This model introduces a new uncertainty of \textit{client sampling}, as the global model may not be reliably learned even if the finite local models are perfectly known. Furthermore, this uncertainty cannot be quantified \textit{a priori} without knowledge of the suboptimality gap. We solve the approximate model by proposing Federated Double UCB (Fed2-UCB), which constructs a novel ``double UCB'' principle accounting for uncertainties from both arm and client sampling. We show that gradually admitting new clients is critical in achieving an $O(\log(T))$ regret while explicitly considering the communication \congr{cost}. The exact model, where the global bandit model is the exact average of heterogeneous local models, is then studied as a special case. We show that, somewhat surprisingly, the order-optimal regret can be achieved independent of the number of clients with a careful choice of the update periodicity.  Experiments using both synthetic and real-world datasets corroborate the theoretical analysis and provide interesting insight into the proposed algorithms.
\end{abstract}

\section{Introduction}
\label{sec:intro}

Federated learning (FL) \cite{mcmahan2017communication} is a new distributed machine learning (ML) paradigm that addresses new challenges in modern machine learning (ML). In particular, FL handles distributed ML with the following characteristics: 

\begin{itemize}[leftmargin=*,itemindent=\dimexpr\labelsep+\labelwidth \relax]\itemsep=0pt
    \item \textbf{Non-IID local datasets.}
    FL caters to the growing trend that massive amount of the real-world data are generated directly at the edge devices. The local datasets are likely drawn from non-independent and identically distributed (non-IID) distributions, and do not represent the global distribution. 
    
    \item \textbf{Massively distributed.}
    The number of participating clients can be significant, e.g., on the order of millions \cite{bonawitz2019towards}.
    
    \item \textbf{Communication efficiency.} 
    The communication cost scales with the number of clients, which becomes one of the primary bottlenecks of the FL system \cite{mcmahan2017communication}. It is critical to minimize the communication cost while maintaining the learning accuracy.
    
    \item  \textbf{Privacy.} 
    FL protects the local data privacy by only sharing model updates instead of the raw data.
\end{itemize}

While the state of the art FL largely focuses on the supervised learning setting, we propose to extend the core principles of FL to the multi-armed bandits (MAB) problem. This is motivated by real-world applications, such as:
\begin{itemize}[leftmargin=*,itemindent=\dimexpr\labelsep+\labelwidth \relax]\itemsep=0pt
    
    \item  \textbf{Cognitive radio.}
    A base station wants to select one channel from a given set of channels that is most likely to be ``empty'' in its  coverage area. It is well-known that different geographic locations have different channel availabilities, and the (ground-truth) global channel availability is the average over the entire coverage area (see Section~\ref{sec:apprmod} for a detailed discussion). The base station, however, is fixed at one location and cannot learn the global channel availability by itself. A common solution is to utilize randomly placed devices (e.g., mobile phones) in the coverage area to sample the channels and then aggregate at the base station. Each device is at a different location and thus samples a non-IID local channel availability.  In addition, the bandit problem is approximate because there are only {finite} devices while the global model is integrated over the entire coverage area.
    
    \item \textbf{Recommender system.}
    The central server wants to recommend the most popular item to new customers to maximize the expected reward. The server does not initially have the global item popularity but can learn via interacting with customers, leading to a bandit problem \cite{li2010contextual}. In reality, however, the server may not learn the popularity model \textit{directly} from user behavior data due to privacy concerns or regulation requirement (e.g., private user data in some regions may not be shared outside). Instead, the user data \congr{are} stored strictly on the client device or local server (and never leave) for better privacy preservation. The local view of item popularity is often biased and not representative of the overall distribution, while the global server can only access some {aggregate} information instead of individual data, resulting in a federated learning problem in the bandit setting.
\end{itemize} 

\begin{figure*}[thb]
\centering
	\subfigure[Cognitive radio]{ \includegraphics[width=0.4\linewidth]{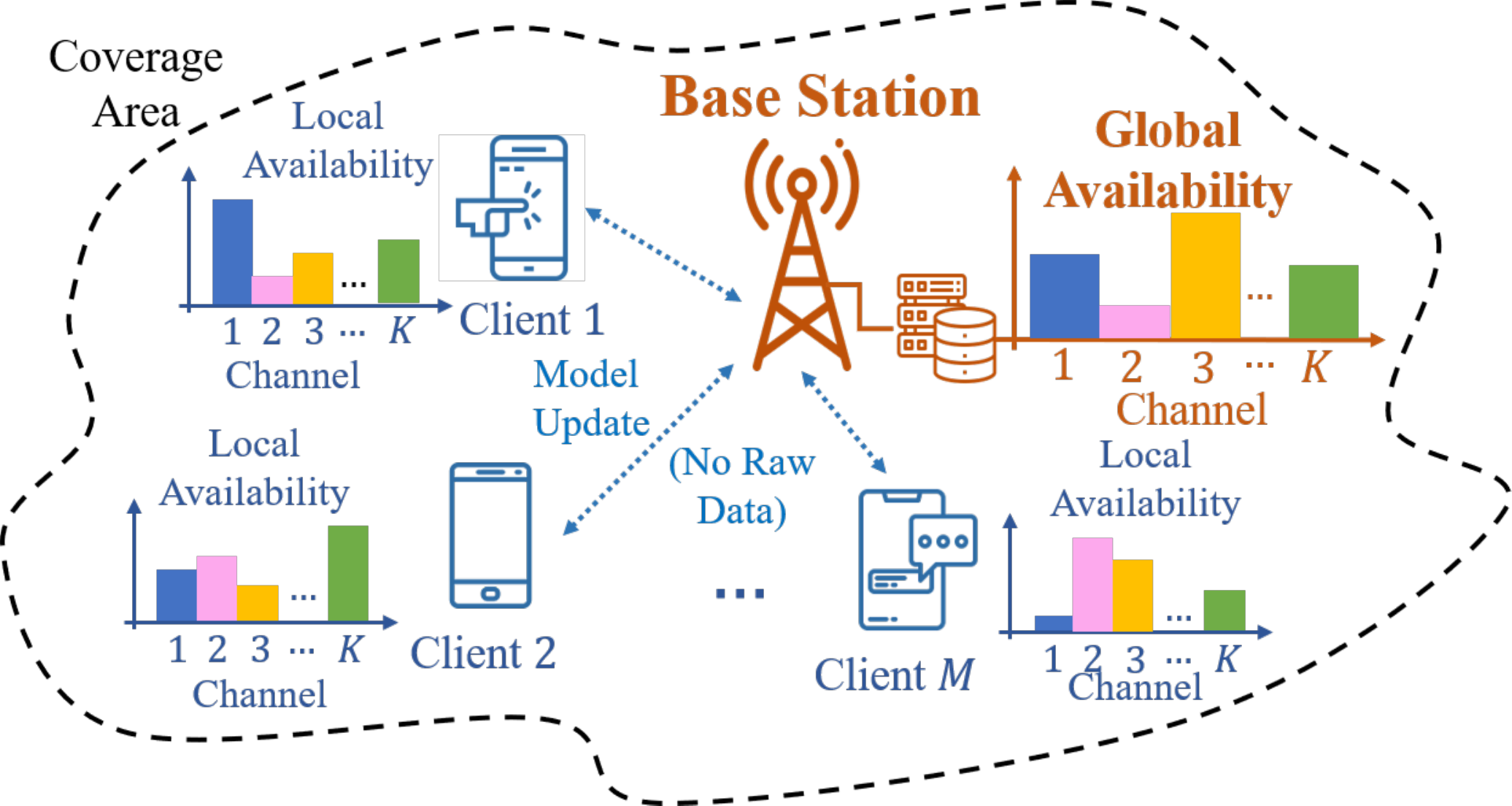}}
	\subfigure[Recommender system]{ \includegraphics[width=0.4\linewidth]{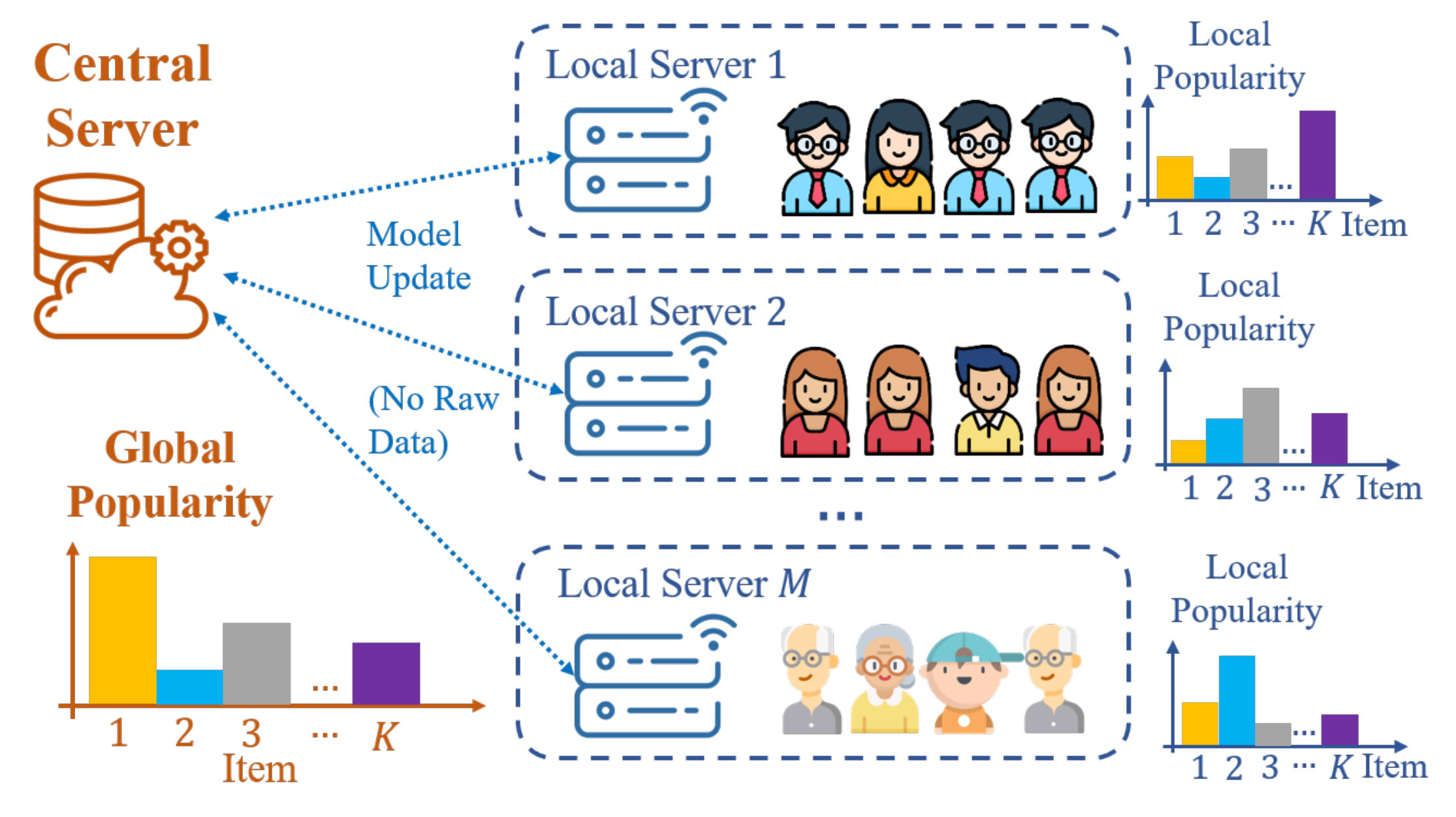}}
	\caption{Motivating examples for federated multi-armed bandits.}
	%\vspace{-0.1in}
	\label{fig:example}
\end{figure*}

In both applications, which are illustrated in Fig.~\ref{fig:example}, we see that the general FL characteristics still apply, and yet the underlying problem is a bandit one. This leads to a natural marriage of FL and MAB, and we are motivated to solve a global stochastic MAB problem from (possibly a large number of) local bandit models that are non-IID, in a communication-efficient and privacy-preserving manner. 

In this work, a novel framework of Federated MAB (FMAB) is developed, which represents the first {\em systematic} attempt to bridge FL and MAB to the best of our knowledge. The FMAB framework is general and can incorporate a variety of bandit problems that share the FL principles. We demonstrate the merit of this framework by first studying an approximate FMAB model, where the global bandit model exists as a ground truth while local bandit models are random realizations of it. In addition to the usual reward uncertainty from arm sampling, this setting introduces a new uncertainty associated with client sampling. In particular, the approximate model does not assume any suboptimality gap knowledge, which prohibits determining the requirement for client sampling \textit{a priori}. Mixing client sampling with arm sampling without the knowledge of suboptimality gap significantly complicates the problem, and we address these challenges by proposing a novel {Federated Double UCB (Fed2-UCB)} algorithm that gradually samples new clients while performing arm sampling, and thus simultaneously explores and balances both types of uncertainty. Theoretical analysis shows that Fed2-UCB achieves an $O(\log(T))$ regret (which explicitly considers communication cost) that approaches the lower bound of the standard stochastic MAB model with an additional term of communication loss. As a special case, the exact FMAB model is then studied, where the global model is the exact average of all local models. The Fed1-UCB algorithm degenerates from Fed2-UCB and achieves an order-optimal regret upper bound which, somewhat surprisingly, is independent of the number of clients with a proper choice of the update periodicity. Numerical simulations on synthetic and real-world datasets demonstrate the effectiveness and efficiency of the proposed algorithms and offer some interesting insights.

\section{Problem Formulation}
\label{sec:model}
In the standard stochastic MAB setting, a single player directly plays $K$ arms, with rewards $X_k$ of arm $k\in[K]$ sampled independently from a $\sigma$-subgaussian distribution with mean $\mu_{k}$. At time $t$, the player chooses an arm $\pi(t)$ and the goal is to receive the highest expected cumulative reward in $T$ rounds, which is characterized by minimizing the (pseudo-)regret:
\begin{equation}
\label{eqn:regret_single}
	R(T)=\mathbb{E}\left[\sum\nolimits_{t=1}^T X_{k_*}(t)-\sum\nolimits_{t=1}^{T}X_{\pi(t)}(t)\right],
\end{equation}
where $k_*$ is the optimal arm with mean reward $\mu_*\doteq \mu_{k_*}=\max_{k\in[K]}\mu_k$, and the expectation is taken over \shir{the randomness of both policy and environment}. As shown by \citet{Lai:1985}, there exists a lower bound for the regret as:
\begin{equation}\label{eqn:single_lower}%\small
	\liminf_{T\to\infty}\frac{R(T)}{\log(T)}\geq \sum\nolimits_{k\not=k_*}\frac{\mu_*-\mu_k}{\text{kl}(\mu_k,\mu_*)},
\end{equation}
where $\text{kl}(\mu_k,\mu_*)$ denotes the KL-divergence between the two corresponding distributions. 

In this section, we present a framework of FMAB as illustrated in Fig.~\ref{fig:framework}. The key aspects of the FL principles mentioned in Section \ref{sec:intro} are also elaborated, which become more clear when algorithm designs are presented. 

\begin{figure}[htb]
	\centering
	\includegraphics[width=0.4\textwidth]{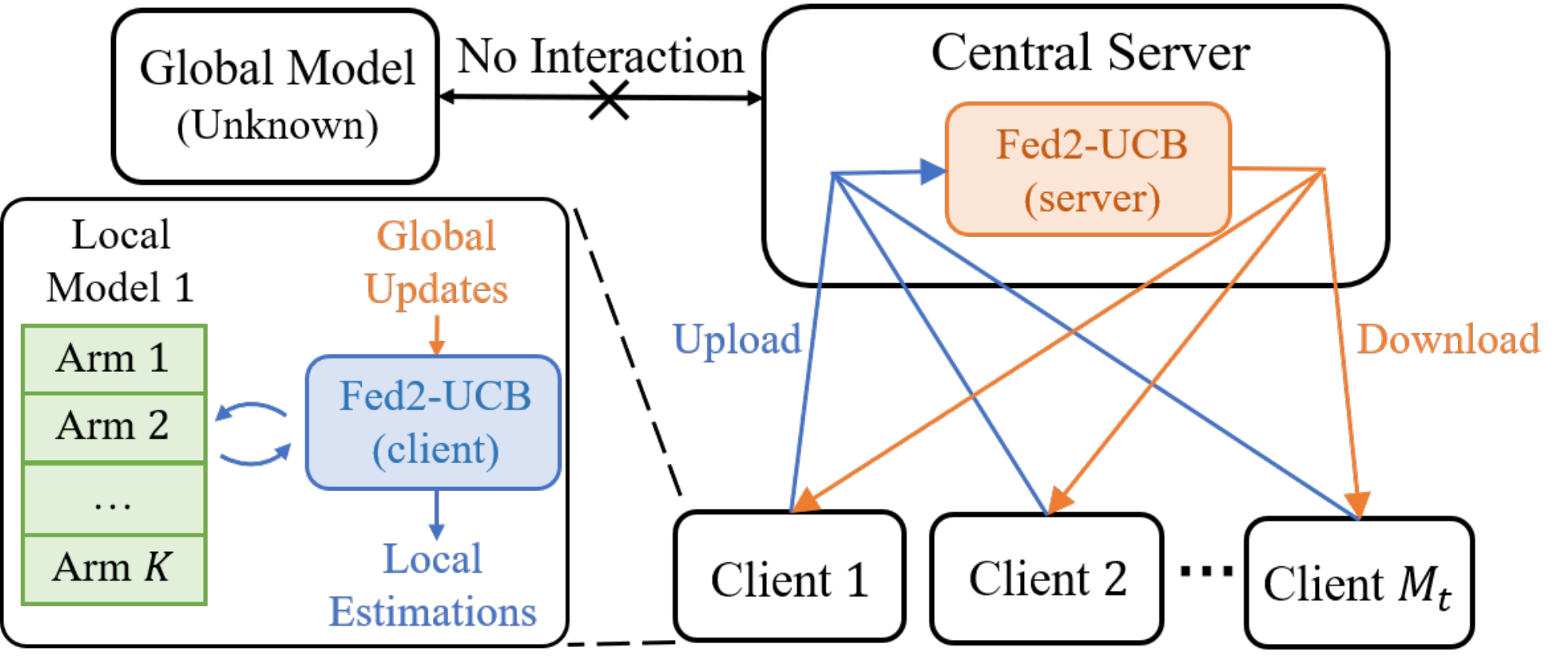}
	\caption{The FMAB framework.}
% 	\vspace{-0.1in}
	\label{fig:framework}
\end{figure}

\textbf{Clients.} Multiple clients interact with the same set of $K$ arms (referred as ``local arms'') in the FMAB framework. We denote $M_t$ as the number of participating clients at time $t$, who are labeled from $1$ to $M_t$ to facilitate discussions (they are not used in the algorithms). A client can only interact with her own local MAB model, and there is no direct communication between clients. Arm $k$ generates independent \textit{observations} $X_{k,m}$ for client $m$ following a $\sigma$-subgaussian distribution with mean $\mu_{k,m}$.  Note that $X_{k,m}$ is only an observation but not a reward. For different clients $n \neq m$, their models are non-IID; hence $\mu_{k,n}\neq \mu_{k,m}$ in general.

\textbf{Server.} There exists a central server with a global stochastic MAB model, which has the same set of $K$ arms (referred as ``global arms'') of $\sigma$-subgaussian reward distributions with mean reward $\mu_k$ for arm $k$. The true rewards for this system are generated on this global model, thus the learning objective is on the global arms. However, the server cannot directly observe rewards on the global model; she can only interact with clients who feed back information of their local observations. We consider the general non-IID situation where the local models are not \congr{necessarily} the same as the global model, and also make the common assumption that clients and the server are fully synchronized \cite{mcmahan2017communication,bonawitz2019towards}.

\textbf{Communication cost.} Although clients cannot communicate with each other, after certain time, they can transmit local ``model updates'' based on their local observations to the server, which aggregates these updates to have a more accurate estimation of the global model. The new estimation is then sent back to the clients to replace the previous estimation for future actions. However, just like in FL, the communication resource is a major bottleneck and the algorithm has to be conscious about its usage. We incorporate this constraint in FMAB by imposing a loss $C$ every time a client communicates to the server, which will be accounted for in the performance measure defined below.

\subsection{The Approximate Model}
\label{sec:apprmod}
Although the non-IID property of local models is an important feature of FMAB, there must exist some relationship between local and global models so that {observations} on local bandit models help the server learn the global model. Here, we propose the approximate FMAB model, where the global model is a fixed (but hidden) ground truth (i.e., exogenously generated regardless of the participating clients), and the local models are IID random realizations of it. 

Specifically, the global arm $k$ has a fixed mean reward of $\mu_k$. For client $m$, the mean reward $\mu_{k,m}$ of her local arm $k$ is a sample from an unknown distribution $\phi_k$, which is a $\sigma_c$-subgaussian distribution with mean $\mu_{k}$. For a different client $n\not=m$, $\mu_{k,n}$ is sampled IID from $\phi_k$. 
Since local models are stochastic realizations of the global model, a \textit{finite} collection of the former may not necessarily represent the latter.  In other words, \shir{if there are $M$ involving clients, although $\forall m\in[M], \mathbb{E}[\mu_{k,m}]=\mu_k$, the averaged local model $\hat{\mu}_{k}^{M}\doteq \frac{1}{M}\sum_{m=1}^{M}\mu_{k,m}$ may not be consistent with the global model. Specifically,  $\hat{\mu}_{k}^{M}$ is not necessarily equal (or even close) to $\mu_k$, which introduces significant difficulties.} Intuitively, the server needs to sample sufficiently many clients to have a statistically accurate estimation of the global model, but as we show later, the required number of clients cannot be obtained \textit{a priori} without the suboptimality gap knowledge. The need of client sampling also coincides with the property of massively distributed clients in FL.

\textbf{Motivation Example.} The approximate model \congr{captures the key characteristics of} a practical cognitive radio system, as illustrated in Fig.~\ref{fig:example}(a). Assume a total of $K$ candidate channels, indexed  by $\{1,...,K\}$. Each channel's availability is location-dependent, with $p_k(x)$ denoting the probability that channel $k$ is available at location $x$.  
The goal of the base station is to choose one channel out of $K$ candidates to serve all potential cellular users (e.g., control channel) in the given coverage area $\mathcal{D}$ with area $D$. Assuming users are uniformly randomly distributed over $\mathcal{D}$, the global channel availability is measured throughout the entire coverage area as
\begin{equation}%\small
    p_k = \mathbb{E}_{x\sim u(\mathcal{D})}\left[p_k(x)\right]=\oiint_{\mathcal{D}}\frac{1}{D}p_k(x)dx.
    \label{eqn:crcontinous}
\end{equation}
It is well known in wireless research that a base station cannot directly sample $p_k$ by itself, because it is fixed at one location\footnote{The best it can do by itself is to estimate $p_k(x_{\text{BS}})$ where $x_{\text{BS}}$ is the location of the base station, possibly through the Network Listen Mode \cite{3gpp:36921}.}. In addition, Eqn.~\eqref{eqn:crcontinous} requires a \textit{continuous} sampling throughout the coverage area, which is not possible in practice. Realistically, the base station can only direct cellular user $m$ at \textit{discrete} location $x_m$ to estimate $p_k(x_m)$, and then aggregate observations from finite number of users as \shir{$\hat{p}_{k}= \frac{1}{M}\sum_{m=1}^{M}p_k(x_m)$} to approximate $p_k$.  Clearly, even if $p_k(x_m)$ are perfect, $\hat{p}_{k}$  may not necessarily represent $p_k$ well.

\textbf{Regret definition.} Without loss of generality, we assume there is only one optimal global arm $k_*$ with $\mu_*\doteq\mu_{k_*} = \max_{k\in[K]}\mu_k$, and $\Delta=\mu_*-\max_{k\neq k_*}\{\mu_k\}$ denotes the suboptimality gap of the global model (both unknown to the algorithm). We further denote $\gamma_1,\cdots,\gamma_{T_c}$ as the time slots when the clients communicate with the central server for both upload and download. The notion of (pseudo-)regret in Eqn.~\eqref{eqn:regret_single} for the single-player model can be generalized to all the clients with additional communication loss, as follows:
\begin{equation}\label{eqn:regret_fed}%\small
R(T)=\mathbb{E}\Bigg[\underbrace{\sum_{t=1}^{T}M_tX_{k_*}(t)-\sum_{t=1}^T\sum_{m=1}^{M_t}X_{\pi_m(t)}(t)}_{\text{exploration and exploitation}}+\underbrace{\sum_{\tau=1}^{T_c}CM_{\gamma_\tau}}_{\text{communication}}\Bigg],
\end{equation}
where $\pi_m(t)$ is the arm chosen by client $m$ at time $t$. In this work, we aim at designing algorithms with $O(\log(T))$ regret as in the single-player setting.

Several comments are in place for Eqn. \eqref{eqn:regret_fed}. First, the reward oracle is defined with respect to the {single} global optimal arm but not the distinct local optimal arms. This choice is analogous to that the reward oracle of the pseudo-regret in the single-player MAB model is defined with respect to \textit{one} optimal arm throughout the horizon but not the arms with the highest reward at every time slot \cite{Bubeck:2012}. Second, the cumulative reward of the system is defined on the global model, because clients only receive \textit{observations} from playing the local bandit game, and the \textit{reward} is generated at the system-level global model. Taking the cognitive radio system as an example, the choice by each client only produces her observation of the channel availability, but the reward is generated by the base station when this channel is used for the entire coverage area. Lastly, regret definition in Eqn. \eqref{eqn:regret_fed} discourages the algorithm to involve too many clients. Ideally, only sufficiently many clients should be admitted to accurately reconstruct the global model, and any more clients would result in more communication loss without improving the model learning.

\section{Fed2-UCB for Approximate FMAB}

\subsection{Challenges and Main Ideas}

The first and foremost challenge in the approximate model \shir{comes from that the local models are only stochastic realizations of the global model. Even with the perfect information of all local arms, the optimal global arm may not be produced faithfully.} We refer to this new problem as the \textit{uncertainty from client sampling}. How to simultaneously handle the two types of uncertainty (client sampling and arm sampling) is at the center of solving the approximate model. 

A second issue comes from the conflict between non-IID local models and the global model. In particular, the globally optimal arm may be sub-optimal for a client's local model, and hence it cannot be correctly inferred by the client individually. Communication between clients and the server is key to address this conflict, but the challenge is how to control the communication loss and balance the overall regret.

In this section, we first characterize the uncertainty from client sampling
by analyzing the probability that the averaged local model does not faithfully represent the global model, and illustrate that without knowledge of the suboptimality gap $\Delta$, the algorithm cannot determine \textit{a priori} the number of required clients. Then, Federated Double UCB (Fed2-UCB) is proposed, in which a novel ``double UCB'' principle carefully balances and trades off the two sources of uncertainty while controlling the communication cost.

\subsection{Client Sampling}
In the approximate model, the key to determine whether the local knowledge is sufficient lies in whether the optimal global arm can be inferred correctly. \shir{When there are $M$ involving clients}, the best approximate of the global model is the averaged local model, i.e., \shir{$\hat{\mu}_k^{M}$}. Although the utilities of local arms may be different from the global model, \shir{if the true optimal global arm is still optimal in this averaged local model, i.e., $\hat{\mu}_{k_*}^{M}>\max_{k\neq k_*}\hat{\mu}_{k}^{M}$}, a sub-linear regret can be achieved with local knowledge. Otherwise, \shir{arm $k_*$ is not optimal with respect to $\hat{\mu}_k^{M}$, and} no matter how many explorations are performed locally (even with perfect local knowledge), the global optimal arm cannot be found \shir{using the sampled $M$ local models} and thus a linear regret occurs.

The following theorem characterizes the accuracy of representing the global model by the averaged local model from a fixed number of clients.
\begin{theorem}\label{thm:accuracy}
	 With $M$ involved clients, denote $P_z=\mathbb{P}\left(\hat{\mu}_{k_*}^M\leq\max_{k\in[K]}\hat{\mu}_k^M\right)$, the following result holds:
    \begin{equation*}
    \small
        \begin{aligned}
        P_z &= O\left(\sum\nolimits_{k\not=k_*}\exp\left\{-\sigma_c^{-2}M(\mu_*-\mu_k)^2\right\}\right)
        = O\left(K\exp\left\{-\sigma_c^{-2}M\Delta^2\right\}\right).
        \end{aligned}
    \end{equation*} 
\end{theorem}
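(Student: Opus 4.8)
The plan is a routine concentration argument: the only randomness entering $P_z$ is the client sampling $\mu_{k,m}\sim\phi_k$, so the statement reduces to a tail bound on sample means of i.i.d.\ subgaussian variables. First I would rewrite the event of interest as ``$k_*$ fails to be the unique maximizer of the averaged local model'', i.e.\ $P_z=\mathbb{P}\left(\bigcup_{k\neq k_*}\{\hat{\mu}_k^M\geq\hat{\mu}_{k_*}^M\}\right)$, and apply a union bound to get $P_z\leq\sum_{k\neq k_*}\mathbb{P}(\hat{\mu}_k^M\geq\hat{\mu}_{k_*}^M)$.

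Next, fix a suboptimal arm $k\neq k_*$ and write $\hat{\mu}_k^M-\hat{\mu}_{k_*}^M=\frac{1}{M}\sum_{m=1}^M(\mu_{k,m}-\mu_{k_*,m})$, the sample mean of the i.i.d.\ variables $Z_m\doteq\mu_{k,m}-\mu_{k_*,m}$ with $\mathbb{E}[Z_m]=\mu_k-\mu_*=-(\mu_*-\mu_k)<0$. Since $\mu_{k,m}-\mu_k$ and $\mu_{k_*,m}-\mu_*$ are each $\sigma_c$-subgaussian, the centered variable $Z_m-\mathbb{E}[Z_m]$ is $c\sigma_c$-subgaussian, with $c=\sqrt{2}$ if the local arm means of a client are drawn independently across arms and $c=2$ in general (triangle inequality for the subgaussian norm). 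By independence across the $M$ clients, $\hat{\mu}_k^M-\hat{\mu}_{k_*}^M-(\mu_k-\mu_*)$ is then $(c\sigma_c/\sqrt{M})$-subgaussian, so the standard subgaussian (Hoeffding-type) tail bound gives $\mathbb{P}(\hat{\mu}_k^M\geq\hat{\mu}_{k_*}^M)=\mathbb{P}(\hat{\mu}_k^M-\hat{\mu}_{k_*}^M-(\mu_k-\mu_*)\geq\mu_*-\mu_k)\leq\exp\left(-\frac{M(\mu_*-\mu_k)^2}{2c^2\sigma_c^2}\right)$.

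Summing over $k\neq k_*$ yields $P_z=O\left(\sum_{k\neq k_*}\exp\{-\sigma_c^{-2}M(\mu_*-\mu_k)^2\}\right)$; since $\mu_*-\mu_k\geq\Delta$ for every $k\neq k_*$ and there are at most $K-1$ such terms, this is in turn $O\left(K\exp\{-\sigma_c^{-2}M\Delta^2\}\right)$, which is the claim. I do not anticipate a genuine obstacle: the only points requiring care are (i) pinning down the subgaussian parameter of the within-client difference $\mu_{k,m}-\mu_{k_*,m}$ — i.e.\ whether the model draws a client's local arm means jointly or independently across arms (it only changes the constant absorbed into the $O(\cdot)$) — and (ii) checking that the non-strict inequality in the definition of $P_z$ is harmless, which it is, being dominated by the $\geq$ event bounded above. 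Both are bookkeeping rather than mathematical difficulty; the substantive takeaway, that the client-sampling error decays geometrically in $M$ at a rate governed by the \emph{unknown} gap $\Delta$, falls out immediately and is exactly what later motivates gradual client admission.
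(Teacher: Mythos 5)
Your proposal is correct, and its skeleton — rewrite the event as a union over suboptimal arms, apply a union bound, then a subgaussian Hoeffding bound per arm, and finally lower-bound each gap by $\Delta$ — is exactly the paper's. The only real difference is the per-arm step: the paper lower-bounds $\mathbb{P}\left(\hat{\mu}^M_{k_*}>\hat{\mu}^M_k\right)$ via the midpoint event $\left\{\hat{\mu}^M_{k_*}\geq\tfrac{1}{2}(\mu_k+\mu_*)\geq\hat{\mu}^M_k\right\}$ and then factors this probability into a product of two one-sided tails of $\tfrac{\sigma_c}{\sqrt{M}}$-subgaussian variables, a factorization that implicitly assumes a client's local means are independent across arms; you instead bound the one-sided tail of the difference $\hat{\mu}^M_k-\hat{\mu}^M_{k_*}$ directly, treating $\mu_{k,m}-\mu_{k_*,m}$ as a $c\sigma_c$-subgaussian variable, which sidesteps the across-arm independence question entirely at the cost of a constant ($c=\sqrt{2}$ or $2$) in the exponent — a constant that, like the paper's factor of $8$, is absorbed into the stated $O(\cdot)$ anyway. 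So your route is marginally more general and arguably cleaner (one tail bound instead of an intersection-plus-factorization), while the paper's buys nothing extra here beyond avoiding a statement about the subgaussian parameter of a difference; both yield the same theorem, and your closing remarks about the non-strict inequality and the role of the unknown $\Delta$ are consistent with how the paper uses the result.
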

Theorem \ref{thm:accuracy} indicates that the probability that the averaged local model does not represent the global model, \shir{i.e., $\hat{\mu}_{k_*}^M\leq \max_{k\in[K]}\hat{\mu}_k^M$,} decreases exponentially with respect to the number of involved clients $M$. Thus, it is fundamental to involve a sufficiently large number of clients in order to reconstruct the global model correctly. More specifically, to guarantee that $P_z= O(1/T)$, by which the overall regret can scale sub-linearly, it is sufficient to sample $M$ clients with
\begin{equation}\label{eqn:m_accuracy}
    M= \Omega \left(\sigma_c^2\Delta^{-2}\log(KT)\right).
\end{equation}
If Eqn.~\eqref{eqn:m_accuracy} is satisfied throughout the bandit game, the optimal arm can be successfully found. However, clients do not have access to the knowledge of $\Delta$. Thus, the requirement in Eqn. \eqref{eqn:m_accuracy} cannot be guaranteed in advance.

\congr{On the other hand, involving too many clients may be detrimental to the regret, as can be seen in Eqn. \eqref{eqn:regret_fed}. Specifically, in order to have an $O(\log(T))$ regret, $M$ should satisfy:}
\begin{equation}\label{eqn:m_upper}%\small
    M = O\left(\log(T)\right).
\end{equation}
Comparing Eqns. \eqref{eqn:m_accuracy} and \eqref{eqn:m_upper} suggests that $M$ has to be $\Theta(\log(T))$ to achieve a correct representation of the global model while maintaining an $O(\log(T))$ regret.

\subsection{The Fed2-UCB Algorithm}
With the unknown requirement in Eqn. \eqref{eqn:m_accuracy}, it is \congr{unwise} to only admit a small number of clients in the whole game. On the other hand, Eqn. \eqref{eqn:m_upper} prohibits involving too many clients \congr{to achieve an $O(\log(T))$ regret}. There are also practical system considerations that prevent having too many clients, which has been discussed in the context of FL \cite{bonawitz2019towards}. We propose the Fed2-UCB algorithm where the central server gradually admits new clients into the game after each communication round while keeping local clients gathering observations. The method of gradually increasing the clients ensures that the server samples a set of small but sufficiently representative clients based on the underlying statistical structure of the bandit game. The proposed ``double UCB'' principle simultaneously addresses the uncertainty from both client sampling and arm sampling. 

\begin{algorithm}[htb]
        \small
		\caption{Fed2-UCB: client $m$}
		\label{alg:fed2_local}
		\begin{algorithmic}[1]
			\State Initialize $p\gets 1$; $[K_1]\gets [K]$
			\While{$K_{p}>1$} 
			\State \shir{Pull each active arm $k\in [K_p]$ for $f(p)$ times}
			\State \shir{Calculate the local sample means $\bar{\mu}_{k,m}(p),\forall k\in[K_p]$}
			\State Send local updates $\bar{\mu}_{k,m}(p), \forall k\in[K_p]$ to the server
			\State Receive global update set $E_p$ from the server
			\State $[K_{p+1}]\gets[K_p]\backslash E_p$; $p\gets p+1$
			\EndWhile
			\State $F\gets$the only element in $[K_p]$; Stay on arm $F$ until $T$
		\end{algorithmic}
\end{algorithm}
\begin{algorithm}[htb]
        \small
		\caption{Fed2-UCB: central server}
		\label{alg:fed2_central}
		\begin{algorithmic}[1]
			\State Initialize $p\gets 1$; $[K_1]\gets [K]$		\While{$K_{p}>1$}
            \State Admit $g(p)$ new clients  \Comment{\textit{Client sampling}}
			\State Receive local updates $\bar{\mu}_{k,m}(p), \forall k\in[K_p],\forall m\in[M(p)]$
			\State Calculate $\forall k\in[K_p],\bar{\mu}_{k}(p)\gets \sum_{m=1}^{M(p)}\bar{\mu}_{k,m}(p)/M(p)$
			\State $E_p\gets \left\{k\in[K_p]|\bar{\mu}_k(p)+B_{p,2}\leq \max\nolimits_{l\in[K_p]}\bar{\mu}_l(p)-B_{p,2}\right\}$ 
			\State Send global update set $E_p$ to all involved clients
			\State $[K_{p+1}]\gets[K_p]\backslash E_p$; $p\gets p+1$
			\EndWhile
		\end{algorithmic}
\end{algorithm}

The Fed2-UCB algorithm is performed in phases simultaneously and synchronously at clients and the central server. Clients collect observations and update local estimations for the arms that have not been declared as sub-optimal, i.e., the active arms, while the server admits new clients and aggregates the local estimations as global estimations to eliminate sub-optimal active arms. We denote the set of active arms in the $p$-th phase by $[K_p]$ with cardinality $K_p$. The detailed algorithm for the clients and the central server are given in Algorithms \ref{alg:fed2_local} and \ref{alg:fed2_central}, respectively. 

At phase $p$, $g(p)$ new clients are first added into the game by the server. These clients can be viewed as interacting with newly sampled local MAB models. Each client, regardless of newly added or not, performs a sequential arm sampling among the currently active arms for $K_pf(p)$ times on their own local models, which means each active arm is pulled $f(p)$ times by each client. Thus, arm $k \in[K_p]$ is played a total of $M(p)f(p)$ times in phase $p$, where $M(p) =\sum_{q=1}^pg(p)$ is the overall number of clients at phase $p$. Parameters $g(p)$ and $f(p)$ are flexible and we discuss the impact of these choices on the regret in the next section. It is worth noting that the rate of admitting new clients is determined not only by $g(p)$ but also by $f(p)$, which characterizes the frequency of client sampling. With new observations from arm sampling, each client $m$ updates her local estimations, i.e., sample mean $\bar{\mu}_{k,m}(p),k\in[K_p]$, then sends them to the central server as a local parameter update. Note that uploading sample means instead of raw samples benefits the preservation of privacy, and additional methods for better privacy protection are presented in the supplementary material.

After receiving local parameter updates from the clients, the central server first updates the global estimation as the average of them for each active arm, i.e.,  $\bar{\mu}_{k}(p)=\frac{1}{M(p)}\sum_{m=1}^{M(p)}\bar{\mu}_{k,m}(p),k\in[K_p]$. While recognizing two coexisting uncertainties, a ``double'' confidence bound $B_{p,2}$ is adopted to characterize them simultaneously as: 
\begin{equation*}%\small
B_{p,2}=\underbrace{\sqrt{6\sigma^2\eta_p\log\left(T\right)}}_{\text{arm sampling}}+\underbrace{\sqrt{6\sigma_c^2\log\left(T\right)/M(p)}}_{\text{client sampling}},
\end{equation*}
where $\eta_p=\frac{1}{M(p)^2}\sum_{q=1}^p\frac{g(q)}{F(p)-F(q-1)}$ and $F(p)=\sum_{q=1}^p f(q)$ with $F(0)=0$. The first terms in $B_{p,2}$ characterizes the uncertainty from arm sampling, which illustrates the gap between the averaged sampled local model and the exact averaged local model. The second term represents the uncertainty from client sampling, which captures the gap between the exact averaged local model and the (hidden) global model. Note that these two types of uncertainty are not independent of each other, since more admitted clients can perform more pulls on arms, thus reducing both simultaneously.

With the global estimations and the confidence bound, the elimination set $E_p$ is determined by the server, which contains arms that are sub-optimal with a high probability:
\begin{equation*}%\small
 	E_p=\left\{k\in[K_p]\big| \bar{\mu}_k(p)+B_{p,2}\leq \max\nolimits_{l\in[K_p]}\bar{\mu}_l(p)-B_{p,2}\right\}.
\end{equation*}  
The set $[E_p]$ is then sent back to the clients, who then remove these arms from their sets of active arms. This iteration keeps going until there is only one active arm left, which is the optimal arm with a high probability.

\subsection{Regret Analysis}
The regret of the Fed2-UCB algorithm is the combination of the exploration loss and communication loss, and relies on the design of $g(p)$ and $f(p)$.
\begin{theorem}\label{thm:regret_2}
    For $k\not=k_*$,  we denote  $\Delta_k=\mu_*-\mu_k$ and $p_k$ as the smallest integer $p$ such that
	\begin{equation}\label{eqn:regret2_ineq}\small
		96\left(\sigma\sqrt{\eta_p}+\sigma_c/\sqrt{M(p)}\right)^2\log(T)\leq \Delta_k^2,
	\end{equation}
	and $p_{\max}=\max_{k\not=k_*}\{p_k\}$. \shir{If} $\max_{t\leq T}\{M_t\}\leq \beta T$, where $\beta$ is a constant, the regret for the Fed2-UCB algorithm satisfies
	\begin{equation*}\small
	\begin{aligned}
	R_2(T)&\leq \sum\nolimits_{k\not=k_*}\sum\nolimits_{q=1}^{p_k}\Delta_kM(q)f(q)
	+C\sum\nolimits_{q=1}^{p_{\max}}M(q)+4\beta(1+C)K.
	\end{aligned}
	\end{equation*}
\end{theorem}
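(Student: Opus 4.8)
The plan is to analyze Fed2-UCB as a synchronous phased successive-elimination scheme via a ``good event'' argument. Let $\mathcal{G}$ be the event that $|\bar\mu_k(p)-\mu_k|\le B_{p,2}$ for every phase $p$ and every active arm $k\in[K_p]$. I would split the regret of Eqn.~\eqref{eqn:regret_fed} as $R_2(T)=\mathbb{E}[\mathbf{1}_{\mathcal{G}}(\cdot)]+\mathbb{E}[\mathbf{1}_{\mathcal{G}^c}(\cdot)]$: on $\mathcal{G}$ the two phase-indexed sums will emerge from correct elimination, while on $\mathcal{G}^c$ a crude worst-case bound times $\mathbb{P}(\mathcal{G}^c)=O(K/T^2)$ will give the additive constant $4\beta(1+C)K$, using $\max_{t\le T}M_t\le\beta T$ and the crude bound $\Delta_{\max}:=\mu_*-\min_k\mu_k\le 1$.

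The core step is bounding $\mathbb{P}(\mathcal{G}^c)$, which simultaneously justifies the form of $B_{p,2}$. Fix $k\in[K_p]$; since the algorithm is phased with deterministic per-phase pull counts and $k$ has then been active in all phases $1,\dots,p$, a client $m$ admitted in phase $q_m$ has pulled $k$ exactly $F(p)-F(q_m-1)$ times by the end of phase $p$, so its running local mean decomposes as $\bar\mu_{k,m}(p)=\mu_{k,m}+\xi_{k,m}$, with $\xi_{k,m}$ mean-zero and $\sigma/\sqrt{F(p)-F(q_m-1)}$-subgaussian, independent across $m$, and with $\mu_{k,m}-\mu_k$ mean-zero $\sigma_c$-subgaussian, independent across $m$ and of the $\xi$'s. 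Averaging, $\bar\mu_k(p)-\mu_k=\frac{1}{M(p)}\sum_m(\mu_{k,m}-\mu_k)+\frac{1}{M(p)}\sum_m\xi_{k,m}$: the first sum is $\sigma_c/\sqrt{M(p)}$-subgaussian (the client-sampling term of $B_{p,2}$), and, grouping clients by admission phase with exactly $g(q)$ entering at phase $q$, the second sum is subgaussian with parameter $\sqrt{\frac{\sigma^2}{M(p)^2}\sum_{q=1}^{p}\frac{g(q)}{F(p)-F(q-1)}}=\sigma\sqrt{\eta_p}$ (the arm-sampling term). A subgaussian tail on each piece at $\sqrt{6\log T}$ times its parameter gives a $2T^{-3}$ failure probability each, hence $\mathbb{P}(|\bar\mu_k(p)-\mu_k|>B_{p,2})\le 4T^{-3}$; a union bound over the at most $T$ phases and $\le K$ active arms yields $\mathbb{P}(\mathcal{G}^c)\le 4K/T^2$. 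I expect the identification of $\eta_p$ as the correct variance proxy after pooling the heterogeneous-length per-client estimates to be the one genuinely delicate point; everything else is bookkeeping.

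On $\mathcal{G}$ the elimination is correct. First, $k_*$ is never removed, since $\bar\mu_{k_*}(p)+B_{p,2}\ge\mu_*\ge\mu_l\ge\bar\mu_l(p)-B_{p,2}$ for all $l\in[K_p]$ forces $k_*\notin E_p$; hence the unique surviving arm, on which clients finally park, is $k_*$, contributing no extra regret. Second, a suboptimal $k$ still active at phase $p_k$ is eliminated there: since $k_*\in[K_{p_k}]$, $\max_l\bar\mu_l(p_k)\ge\bar\mu_{k_*}(p_k)\ge\mu_*-B_{p_k,2}$ while $\bar\mu_k(p_k)\le\mu_k+B_{p_k,2}$, so the test $\bar\mu_k(p_k)+B_{p_k,2}\le\max_l\bar\mu_l(p_k)-B_{p_k,2}$ holds as soon as $\mu_k+2B_{p_k,2}\le\mu_*-2B_{p_k,2}$, i.e. $16B_{p_k,2}^2\le\Delta_k^2$; substituting $B_{p,2}^2=6(\sigma\sqrt{\eta_p}+\sigma_c/\sqrt{M(p)})^2\log T$ shows this is exactly the defining inequality~\eqref{eqn:regret2_ineq} of $p_k$. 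Thus on $\mathcal{G}$ arm $k$ is active only in phases $1,\dots,p_k$, and the elimination finishes by phase $p_{\max}$. Organizing the exploration/exploitation loss by phase (in phase $q$ there are $M(q)$ clients each pulling each active arm $f(q)$ times, a loss $M(q)f(q)\sum_{k\in[K_q]\setminus\{k_*\}}\Delta_k$) and summing gives at most $\sum_{k\ne k_*}\Delta_k\sum_{q=1}^{p_k}M(q)f(q)$ on $\mathcal{G}$; with one communication round per phase involving $M(q)$ clients, the communication loss on $\mathcal{G}$ is at most $C\sum_{q=1}^{p_{\max}}M(q)$. On $\mathcal{G}^c$, the exploration loss is at most $\sum_{t\le T}M_t\Delta_{\max}\le\beta T^2$ and the communication loss at most $C\beta T^2$ (at most $T$ phases, at most $\beta T$ clients each); multiplying by $\mathbb{P}(\mathcal{G}^c)\le 4K/T^2$ yields $\le 4\beta(1+C)K$. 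Summing the three contributions gives the stated bound.
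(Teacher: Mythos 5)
Your proposal is correct and follows essentially the same route as the paper: the same good event $|\bar\mu_k(p)-\mu_k|\le B_{p,2}$ with the same identification of the pooled subgaussian parameter $\sigma\sqrt{\eta_p}$ by grouping clients by admission phase, the same elimination threshold $4B_{p_k,2}\le\Delta_k$ recovering Eqn.~\eqref{eqn:regret2_ineq}, and the same phase-wise accounting plus a $\beta(1+C)T^2$ worst-case bound on the bad event of probability at most $4K/T^2$. The only (welcome) additions are your explicit check that $k_*$ is never eliminated, which the paper leaves implicit, and the correct $4\beta(1+C)K$ constant matching the theorem statement.
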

Eqn. \eqref{eqn:regret2_ineq} describes the requirement for phase $p_k$ under two types of uncertainty, by which the sub-optimal arm $k$ is guaranteed to be eliminated with a high probability. For it to hold, eventually we need at least $O(\log(T))$ clients in the game, which coincides with Eqn.~\eqref{eqn:m_accuracy}.

\congr{Theorem \ref{thm:regret_2} provides a general description, using unspecified $g(p)$ and $f(p)$. A better characterization can be had with more specific choices.}

\begin{corollary}\label{col:regret_2}
    With $f(p)=\kappa$ where $\kappa$ is a constant, and $g(p) = 2^p$, the asymptotic regret of Fed2-UCB is
    \begin{equation*}\small
    \begin{aligned}
	R_2(T)&=O\left(\sum_{k\not=k_*}\frac{\kappa(\sigma/\sqrt{\kappa}+\sigma_c)^2\log(T)}{\Delta_k}+C\frac{(\sigma/\sqrt{\kappa}+\sigma_c)^2\log(T)}{\Delta^2}\right).
	\end{aligned}
    \end{equation*}
\end{corollary}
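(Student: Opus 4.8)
The plan is to substitute the specified $f(p)=\kappa$ and $g(p)=2^p$ into the general bound of Theorem~\ref{thm:regret_2} and track the leading-order dependence on $T$. First I would record the elementary consequences of these choices: $M(p)=\sum_{q=1}^{p}2^q=2^{p+1}-2=\Theta(2^p)$, and $F(p)=\kappa p$ with $F(0)=0$. The only nontrivial quantity is
\[
\eta_p=\frac{1}{M(p)^2}\sum_{q=1}^{p}\frac{g(q)}{F(p)-F(q-1)}=\frac{1}{\kappa M(p)^2}\sum_{q=1}^{p}\frac{2^q}{p-q+1},
\]
and estimating this sum tightly is the crux of the argument.

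For the sum, I would reindex by $j=p-q+1$ to obtain $\sum_{q=1}^{p}\frac{2^q}{p-q+1}=2^{p+1}\sum_{j=1}^{p}\frac{2^{-j}}{j}$, and note $\sum_{j=1}^{p}2^{-j}/j\le\sum_{j=1}^{\infty}2^{-j}/j=\ln 2$ while the $j=1$ term alone gives the matching lower bound $\tfrac12$; hence the sum is $\Theta(2^p)=\Theta(M(p))$. Plugging back, $\eta_p=\Theta\!\big(1/(\kappa M(p))\big)$, so that $\sigma\sqrt{\eta_p}+\sigma_c/\sqrt{M(p)}=\Theta\!\big((\sigma/\sqrt{\kappa}+\sigma_c)/\sqrt{M(p)}\big)$. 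This is exactly the simplification that collapses the ``double UCB'' radius into a single $1/\sqrt{M(p)}$ term, which is what makes the final expression clean.

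Next I would solve the phase inequality~\eqref{eqn:regret2_ineq}. With the above, $96(\sigma\sqrt{\eta_p}+\sigma_c/\sqrt{M(p)})^2\log(T)\le\Delta_k^2$ is equivalent, up to absolute constants, to $M(p)=\Omega\big((\sigma/\sqrt{\kappa}+\sigma_c)^2\Delta_k^{-2}\log(T)\big)$; since $M(p)=\Theta(2^p)$ grows geometrically, the smallest such index satisfies $M(p_k)=\Theta\big((\sigma/\sqrt{\kappa}+\sigma_c)^2\Delta_k^{-2}\log(T)\big)$, and because $p_{\max}$ is attained at the arm with the smallest gap, $M(p_{\max})=\Theta\big((\sigma/\sqrt{\kappa}+\sigma_c)^2\Delta^{-2}\log(T)\big)$.

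Finally, I would feed these into the three terms of Theorem~\ref{thm:regret_2}. The geometric growth of $M(q)$ makes the partial sums dominated by their last term: $\sum_{q=1}^{p_k}M(q)f(q)=\kappa\sum_{q=1}^{p_k}M(q)=\Theta(\kappa M(p_k))$ and $\sum_{q=1}^{p_{\max}}M(q)=\Theta(M(p_{\max}))$. Substituting the values of $M(p_k)$ and $M(p_{\max})$ yields the exploration term $\sum_{k\neq k_*}\Theta\big(\kappa(\sigma/\sqrt{\kappa}+\sigma_c)^2\log(T)/\Delta_k\big)$ and the communication term $\Theta\big(C(\sigma/\sqrt{\kappa}+\sigma_c)^2\log(T)/\Delta^2\big)$, while the residual $4\beta(1+C)K$ is an $O(1)$ constant absorbed asymptotically into the $\log(T)$ terms. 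The main obstacle is the tight two-sided estimate of $\eta_p$ — in particular recognizing that the geometric weights $2^q$ concentrate the sum $\sum_q 2^q/(p-q+1)$ near $q=p$, so that it is $\Theta(2^p)$ rather than, say, $\Theta(2^p/p)$; the rest is bookkeeping of geometric series and matching constants.
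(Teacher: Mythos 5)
Your proposal is correct and follows essentially the same route as the paper: substitute $f(p)=\kappa$, $g(p)=2^p$ into Theorem~\ref{thm:regret_2}, solve Eqn.~\eqref{eqn:regret2_ineq} for $M(p_k)=\Theta\bigl((\sigma/\sqrt{\kappa}+\sigma_c)^2\Delta_k^{-2}\log(T)\bigr)$, and use the geometric growth of $M(q)$ to bound the sums by their last terms (this is exactly the $p_k$ entry for $g(p)=2^p$ in the paper's Table~\ref{tbl:regret_Fed2-UCB}). Your explicit two-sided estimate $\eta_p=\Theta\bigl(1/(\kappa M(p))\bigr)$ via the reindexed sum $2^{p+1}\sum_{j\ge 1}2^{-j}/j$ is the one detail the paper leaves implicit, and only the upper bound on $\eta_p$ is actually needed for the corollary.
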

Corollary \ref{col:regret_2} shows that with carefully designed $f(p)=\kappa$ and $g(p)= 2^p$, Fed2-UCB can achieve a regret of $O(\log(T))$. The exploration loss approaches the single-player MAB lower bound\footnote{In the case with Bernoulli rewards, it can be observed that the two terms are of the same order by invoking the Pinsker's inequality $kl(\mu_i,\mu_j)\geq 2(\mu_j-\mu_i)^2$.} in Eqn.~\eqref{eqn:single_lower} \cite{Lai:1985}, which shows the effectiveness of exploration in Fed2-UCB. Since at least $O(\log(T))$ clients need to be involved as indicated by Eqn. \eqref{eqn:m_accuracy}, an $O(\log(T))$ communication loss achieved in Corollary \ref{col:regret_2} is inevitable, which demonstrates the communication efficiency. The overall regret in Corollary \ref{col:regret_2} proves that Fed2-UCB can effectively deal with two types of uncertainty while balancing the communication loss. 

The choice of $g(p)=2^p$ and $f(p)=\kappa$ leads to an exponentially decreasing $B_{p,2}$, which can be viewed as maintaining an exponentially decreasing estimation $\hat{\Delta}$ of $\Delta$ and eliminating arms with a larger gap \cite{auer2010ucb}; thus, it naturally solves the difficulty associated with the unknown $\Delta$. The regret behavior of several other choices of $f(p)$ and $g(p)$ are given in the supplementary material. 

\section{Special Case: Exact Model and Fed1-UCB}
While the approximate model introduces two types of uncertainty simultaneously, here we study a special case of the \textit{exact model}, where the uncertainty from client sampling does not exist. Correspondingly, the Fed1-UCB algorithm, which degenerates from Fed2-UCB, is designed and analyzed.

\subsection{The Exact Model}
In the exact model, the number of clients is fixed, i.e., $M_t=M$, $\shir{\forall t}$, and the global model is the \textit{exact} average of all the local models, which means the global arm $k$ has a mean reward of $\mu_k=\frac{1}{M}\sum_{m=1}^M\mu_{k,m}$. Thus, the global model can be perfectly reconstructed with information of local models and there only exists the uncertainty from arm sampling. The regret expression can be simplified to $R(T)=\mathbb{E}\left[\sum_{t=1}^T MX_{k_*}(t)-\sum_{t=1}^T\sum_{m=1}^MX_{\pi_m(t)}(t)+CMT_c\right]$.  This model focuses on optimizing the performance for a fixed group of clients that do not change throughout the $T$ time steps. In other words, the global model is not exogenously generated but adapts to the involved clients. Taken the recommender system as an example, the overall popularity of one item is the average of its popularity over the potential clients.

\subsection{The Fed1-UCB Algorithm}
Without the uncertainty from client sampling, there is no need of admitting new clients. The same exploration and communication procedure \congr{of Fed2-UCB} is performed in Fed1-UCB without client admitting. The confidence bound used in arm eliminations is also degenerated from $B_{p,2}$ to $B_{p,1}=\sqrt{{6\sigma^2\log(T)}/{(MF(p))}}$, which only characterizes the uncertainty from arm sampling. A complete description of Fed1-UCB is given in the supplementary material.  

\subsection{Theoretical Analysis}
The regret for the Fed1-UCB algorithm only relies on $f(p)$ and is characterized by the following theorem.
\begin{theorem}\label{thm:regret_1}
    For $k\not=k_*$, we denote $\Delta_k=\mu_*-\mu_k$, \shir{$F(p) = \sum_{q=1}^p f(q)$}, $p_k$ as the smallest integer $p$ such that
	\begin{equation}\label{eqn:regret1_ineq}%\small
		MF(p)\geq 96\sigma^2\log(T)/\Delta_k^2,
	\end{equation}
	and $p_{\max}=\max_{k\not=k_*}\{p_k\}$. The regret of Fed1-UCB satisfies
	\begin{equation*}%\small
	\begin{aligned}
    R_1(T)\leq M\sum\nolimits_{k\not=k_*}\Delta_k F(p_k)+CMp_{\max}+2(1+C)MK.
	\end{aligned}
	\end{equation*}
\end{theorem}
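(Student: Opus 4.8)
To prove Theorem~\ref{thm:regret_1}, the plan is to run the standard phased successive-elimination analysis: condition on a high-probability ``good event'' on which the confidence bounds $B_{p,1}$ are valid, analyze the elimination dynamics there, and control the residual failure probability with a crude bound. The one place the exact-model structure must be invoked carefully is the concentration step. Client $m$'s observations of arm $k$ are $\sigma$-subgaussian with the \emph{local} mean $\mu_{k,m}$, so I must check that the server's average $\bar\mu_k(p)=\frac1M\sum_m\bar\mu_{k,m}(p)$ nevertheless concentrates around the \emph{global} mean $\mu_k$; this holds precisely because $\mu_k=\frac1M\sum_m\mu_{k,m}$ in the exact model. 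After $p$ phases, $\bar\mu_k(p)$ is the empirical mean of $MF(p)$ independent observations, so $\bar\mu_k(p)-\mu_k$ is an average of $MF(p)$ independent mean-zero $\sigma$-subgaussians and hence $\sigma/\sqrt{MF(p)}$-subgaussian. With $B_{p,1}=\sqrt{6\sigma^2\log(T)/(MF(p))}$ the subgaussian tail bound gives $\mathbb{P}(|\bar\mu_k(p)-\mu_k|\ge B_{p,1})\le 2T^{-3}$; since every phase costs each client at least one pull there are at most $T$ phases, so a union bound over $\le T$ phases and $\le K$ arms yields $\mathbb{P}(\mathcal{G}^c)\le 2K/T^2$, where $\mathcal{G}=\{|\bar\mu_k(p)-\mu_k|<B_{p,1}\text{ for all phases }p\text{ and all active }k\}$.

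Next I would analyze the elimination behaviour on $\mathcal{G}$. First, $k_*$ is never eliminated: on $\mathcal{G}$ we have $\bar\mu_{k_*}(p)+B_{p,1}>\mu_*$, while every active arm $l$ satisfies $\bar\mu_l(p)-B_{p,1}<\mu_l\le\mu_*$, so the elimination test can never place $k_*$ in $E_p$. Second, a suboptimal arm $k$ is removed no later than the end of phase $p_k$: if $k$ is still active at phase $p_k$, then on $\mathcal{G}$, $\bar\mu_k(p_k)+B_{p_k,1}<\mu_k+2B_{p_k,1}$ and $\max_l\bar\mu_l(p_k)-B_{p_k,1}\ge\bar\mu_{k_*}(p_k)-B_{p_k,1}>\mu_*-2B_{p_k,1}$, and the defining inequality $MF(p_k)\ge 96\sigma^2\log(T)/\Delta_k^2$ is exactly $4B_{p_k,1}\le\Delta_k$, which forces $\mu_k+2B_{p_k,1}\le\mu_*-2B_{p_k,1}$ and hence $k\in E_{p_k}$. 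Consequently, on $\mathcal{G}$ the while-loop runs for at most $p_{\max}$ phases, the surviving arm is $k_*$, each suboptimal arm $k$ is pulled at most $F(p_k)$ times by each of the $M$ clients, and at most $p_{\max}$ communication rounds occur, each costing $CM$.

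Finally I would assemble the bound by splitting the regret according to $\mathcal{G}$ and $\mathcal{G}^c$. On $\mathcal{G}$ the exploration/exploitation term is at most $\sum_{k\ne k_*}\Delta_k\cdot MF(p_k)=M\sum_{k\ne k_*}\Delta_kF(p_k)$ in expectation (the global reward streams being independent of the locally-driven arm choices, so each pull of $k$ contributes mean gap $\Delta_k$), the communication term is at most $CMp_{\max}$, and the post-termination phase where all clients sit on $k_*$ is loss-free. On $\mathcal{G}^c$ I would use the trivial bound that the accumulated regret over the whole horizon is $O(MT)$---the per-round per-client reward gap and the per-round communication cost $C$ being constants, and $T_c\le T$---so multiplying by $\mathbb{P}(\mathcal{G}^c)\le 2K/T^2$ contributes at most $2(1+C)MK$. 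Adding the three pieces gives the claimed bound. The elimination bookkeeping is routine; the points that need care are the centering in the concentration step (where the exact-model averaging is essential) and choosing the constants so that the $\mathcal{G}^c$ contribution stays $O(1)$ in $T$ and matches the $2(1+C)MK$ term in the statement.
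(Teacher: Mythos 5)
Your proposal is correct and follows essentially the same route as the paper's proof: the same $\sigma/\sqrt{MF(p)}$-subgaussian concentration for $\bar\mu_k(p)$ around $\mu_k$ with the $2K/T^2$ union bound, the same elimination lemma via $4B_{p_k,1}\le\Delta_k$, and the same decomposition into exploration plus communication on the good event with a trivial $O((1+C)MT)$ bound on its complement. The only differences are that you spell out two details the paper leaves implicit (that the exact-model averaging is what centers $\bar\mu_k(p)$ at $\mu_k$, and that $k_*$ is never eliminated on the good event), which is fine.
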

\congr{Somewhat surprisingly, Eqn. \eqref{eqn:regret1_ineq} shows that} although involving more clients leads to a faster convergence (i.e., smaller $p_k$), in general the overall necessary arm pulls performed by the clients, i.e., $MF(p_k)$, are independent of $M$. In other words, we can trade off the convergence time with number of clients without additional exploration loss.
\begin{corollary}\label{col:regret_1}
    With $f(p)=\lceil \kappa\log(T)\rceil$ where $\kappa$ is a constant, the asymptotic regret of the Fed1-UCB algorithm is
       $$ R_1(T) =  O\left(\sum_{k\not=k_*}\frac{\sigma^2\log(T)}{\Delta_k}\right).$$
\end{corollary}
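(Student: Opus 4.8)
The plan is to specialize Theorem~\ref{thm:regret_1} to the choice $f(p)=\lceil\kappa\log(T)\rceil$. Since $f$ is constant across phases, $F(p)=\sum_{q=1}^{p}f(q)=p\lceil\kappa\log(T)\rceil$, so the defining condition~\eqref{eqn:regret1_ineq} for $p_k$ becomes $Mp\lceil\kappa\log(T)\rceil\ge 96\sigma^2\log(T)/\Delta_k^2$, giving
\begin{equation*}
p_k=\left\lceil \frac{96\sigma^2\log(T)}{\Delta_k^2\,M\,\lceil\kappa\log(T)\rceil}\right\rceil .
\end{equation*}
The crucial observation is that $p_k$ — and hence $p_{\max}=\max_{k\ne k_*}p_k$ — is \emph{bounded by a constant independent of $T$}: for $T$ large enough that $\kappa\log(T)\ge 1$ we have $\kappa\log(T)\le\lceil\kappa\log(T)\rceil$, so $p_k\le \tfrac{96\sigma^2}{\Delta_k^2 M\kappa}+1$, which does not grow with $T$.

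Next I would bound the three terms in the regret expression of Theorem~\ref{thm:regret_1}. (i)~Exploration: the per-arm pull budget is $MF(p_k)=Mp_k\lceil\kappa\log(T)\rceil$; if $p_k\ge 2$, minimality of $p_k$ gives $M(p_k-1)\lceil\kappa\log(T)\rceil<96\sigma^2\log(T)/\Delta_k^2$, and since $p_k\le 2(p_k-1)$ this yields $MF(p_k)<192\sigma^2\log(T)/\Delta_k^2$, which is independent of $M$ — exactly the phenomenon highlighted after Theorem~\ref{thm:regret_1}; if $p_k=1$ the budget is just the single-phase cost $M\lceil\kappa\log(T)\rceil$. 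Either way $MF(p_k)=\Theta(\log(T))$, so $M\sum_{k\ne k_*}\Delta_k F(p_k)=\sum_{k\ne k_*}\Delta_k\cdot MF(p_k)=O\!\big(\sum_{k\ne k_*}\sigma^2\log(T)/\Delta_k\big)$. (ii)~Communication: $CMp_{\max}=O(1)$ in $T$ since $p_{\max}$ is $T$-independent. (iii)~Boundary: $2(1+C)MK=O(1)$. Summing the three contributions and retaining only the dominant $\log(T)$ growth yields $R_1(T)=O\!\big(\sum_{k\ne k_*}\sigma^2\log(T)/\Delta_k\big)$.

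This is essentially a direct substitution into Theorem~\ref{thm:regret_1}, so I do not expect a genuine obstacle; the only point requiring care is the ceiling bookkeeping and the degenerate case $p_k=1$, where one phase of $\lceil\kappa\log(T)\rceil$ pulls per client already over-samples arm $k$ and the per-arm cost $M\Delta_k\lceil\kappa\log(T)\rceil$ carries a constant depending on $M$ and $\kappa$. This is harmless for an asymptotic-in-$T$ statement (all of $M,K,\sigma,\Delta_k,\kappa,C$ are treated as constants absorbed by $O(\cdot)$), but it clarifies that the clean $\sigma^2/\Delta_k$ coefficient is attained precisely in the regime $96\sigma^2\ge\Delta_k^2 M\kappa$, i.e., when the first phase does not already over-explore.
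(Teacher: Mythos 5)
Your proposal is correct and follows essentially the same route as the paper: a direct specialization of Theorem~\ref{thm:regret_1} with $F(p)=p\lceil\kappa\log(T)\rceil$, giving a $T$-independent $p_k$ (hence constant communication loss $CMp_{\max}$) and an exploration term $M\Delta_k F(p_k)=O(\sigma^2\log(T)/\Delta_k)$, exactly as recorded in the paper's Table~\ref{tbl:regret_Fed1-UCB}. Your extra bookkeeping on the ceiling and the degenerate $p_k=1$ case is sound and only sharpens the constant-factor discussion; it does not change the argument.
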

\congr{Corollary \ref{col:regret_1} states that the exploration loss of Fed1-UCB} approaches the single-player MAB lower bound in Eqn.~\eqref{eqn:single_lower} \cite{Lai:1985}. \shir{It is also worth noting that with $f(p)=\lceil \kappa\log(T)\rceil$, the communication loss of Fed-1UCB is a non-dominating constant, which demonstrates its communication efficiency}. Furthermore, the regret is independent of $M$ asymptotically. The regret behavior with other choices of $f(p)$ are discussed in the supplementary material.

\section{Experiments}
Numerical experiments have been carried out under both applications of cognitive radio and recommender system. Their results are reported in this section to demonstrate the effectiveness and efficiency of Fed2-UCB and Fed1-UCB. For the cognitive radio example, due to the lack of suitable real-world datasets, synthetic datasets are used for simulations \cite{avner2014concurrent,bande2019multi}. For the recommender system, real-world evaluations are performed. The performance of a (hypothetical) single-player improved UCB algorithm \cite{auer2010ucb} directly performed at the server is used as the baseline (labeled as ``baseline''). The communication cost is set to be $C=1$.

\begin{figure*}[htb]
    \centering
	\begin{minipage}[t]{0.245\linewidth}
		\centering
		\includegraphics[width=\linewidth]{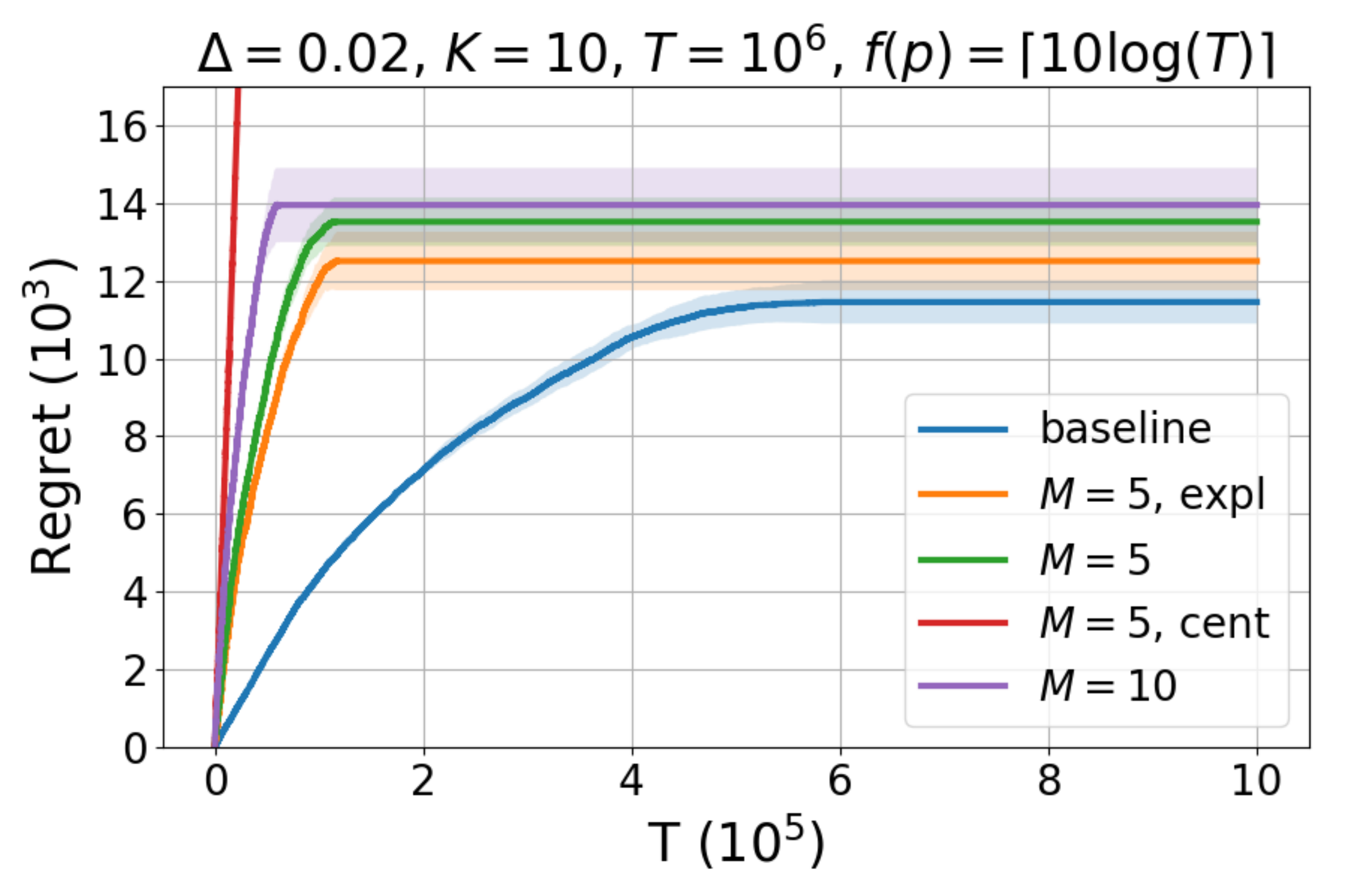}
		\caption{Fed1-UCB performance.}
		\label{fig:feducb_perf}
	\end{minipage}
	\begin{minipage}[t]{0.245\linewidth}
		\centering
		\includegraphics[width=\linewidth]{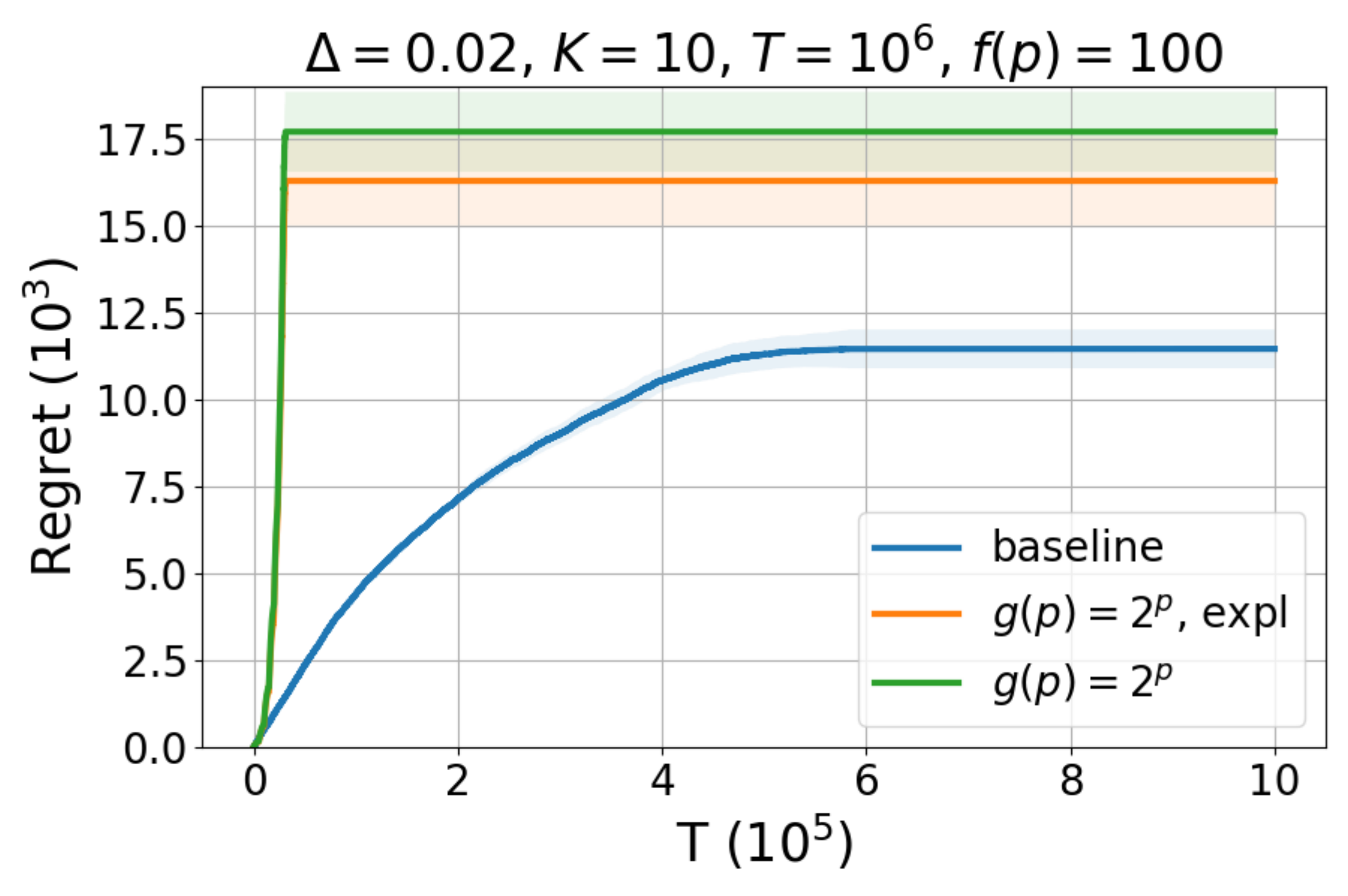}
		\caption{Fed2-UCB performance.}
		\label{fig:fed2ucb_perf}
	\end{minipage}
	\begin{minipage}[t]{0.245\linewidth}
		\centering
		\includegraphics[width=\linewidth]{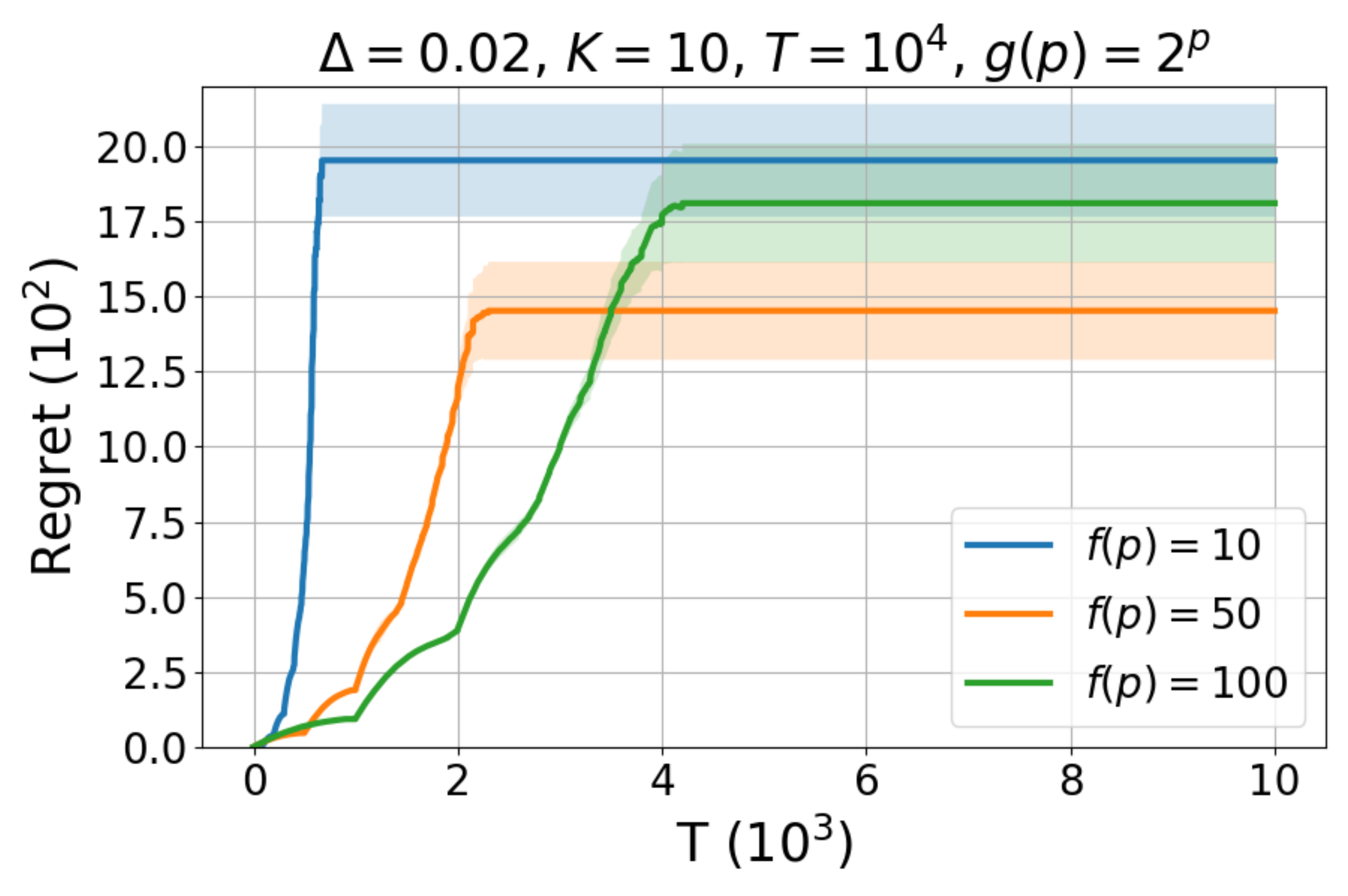}
		\caption{Fed2-UCB with $f(p)$.}
		\label{fig:fed2ucb_perf_short}
	\end{minipage}
	\begin{minipage}[t]{0.245\linewidth}
		\centering
		\includegraphics[width=\linewidth]{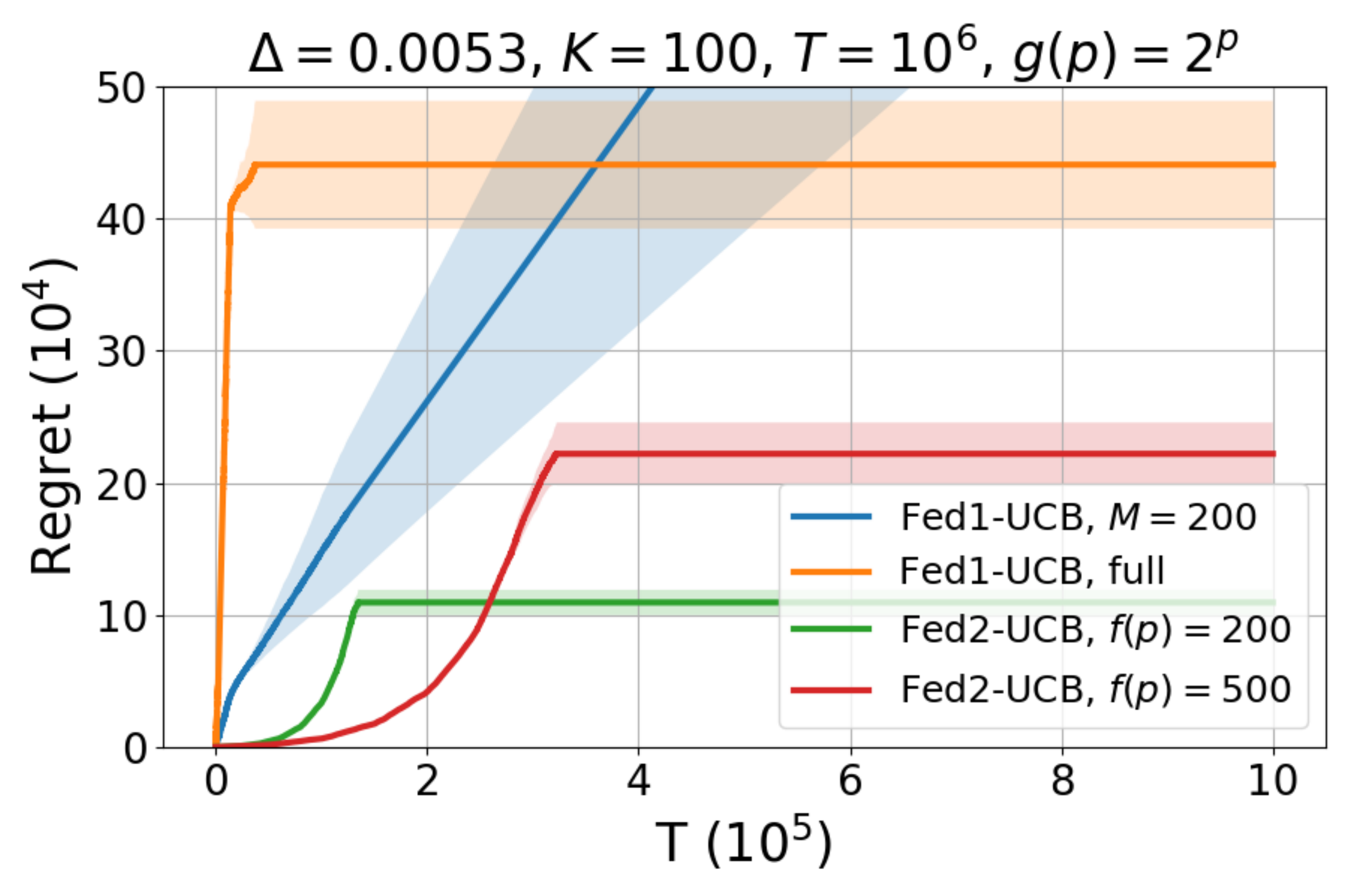}
		\caption{MovieLens.}
		\label{fig:movielens_perf}
	\end{minipage}
\end{figure*}

\subsection{Synthetic Dataset for Cognitive Radio}
A bandit game with $K=10$ arms is used to mimic $10$ candidate channels, and Gaussian distributions with $\sigma = 0.5$ are used to generate local observations of the channel availability. The means of global arms are in the interval $[0.7,0.8]$ with $\Delta=0.02$. We first start with the relatively simple exact model, where $M=5$ clients are involved while arm $1$ is not the optimal arm of any of their local models. As shown in Fig.~\ref{fig:feducb_perf}, with $f(p)=\lceil10\log(T)\rceil$, if there is no communication loss, Fed1-UCB (labeled as ``expl'') achieves almost the same performance as the baseline, which proves its effectiveness. When considering the communication loss, the centralized version of Fed1-UCB (labeled as ``cent''), where clients send their raw data in every time slot, has a very large regret due to significant yet unnecessary communications. However, with $f(p)=\lceil10\log(T)\rceil$, Fed1-UCB only incurs a small communication loss, which proves its efficiency. It is also worth noting that Fed1-UCB converges faster than the baseline, which is the result of \congr{higher arm sampling rate} due to multiple clients simultaneously pulling arms. In other words, the fast convergence over time is due to the increased client dimension. When increasing the number of clients to $M=10$, the overall regret remains approximately the same as $M=5$, but with even faster convergence, which corroborates Theorem \ref{thm:regret_1} and Corollary \ref{col:regret_1}.

For the approximate model, the same set of global arms is used while the local models are generated by Gaussian distributions with $\sigma_c=0.02$. Fig.~\ref{fig:fed2ucb_perf} shows that Fed2-UCB with $f(p)=100$ and $g(p)=2^p$ successfully finds the optimal global arm and, without communication loss, has a performance (labelled as ``expl'') slightly \congr{worse than} the baseline. Furthermore, the additional communication loss is very limited. Compared with the performance of Fed1-UCB in Fig.~\ref{fig:feducb_perf}, we see that Fed2-UCB achieves almost the same performance for the more challenging approximate model, and the convergence of Fed2-UCB is even faster since the impact of increasing the number of clients is already significant at the very beginning. Under a reduced time horizon $T=10^4$, Fig. \ref{fig:fed2ucb_perf_short} \congr{provides a finer look at} the shape of regret curves of Fed2-UCB and illustrates the need of balancing two types of uncertainty\footnote{The baseline is not included in Fig. \ref{fig:fed2ucb_perf_short} since it cannot converge in such a short time period.}. With a short update period $f(p)=10$, new clients are admitted rapidly, which sharply decreases the uncertainty from client sampling, but insufficient local exploration leads to a large uncertainty from arm sampling, which causes a large regret despite the fast the convergence. On the other extreme, although local exploration is guaranteed to be sufficient with $f(p)=100$, it admits new clients slowly, which delays the convergence and causes unnecessary local explorations. $f(p)=50$ strikes a better balance between two types of uncertainty and thus a better performance.

\subsection{Real-world Dataset for Recommender System}
The MovieLens dataset \cite{Cantador:RecSys2011} is used for the real-world evaluation as an implementation of recommender system, which has been widely adopted in MAB studies \cite{oh2019thompson,mahadik2020fast}. It links the movies of MovieLens dataset with IMDb and Rotten Tomatoes movie review systems, and contains $2113$ clients and $10197$ movies. All the users are assumed to be available while the movies are randomly divided into $100$ groups and the observations for clients are defined as their ratings of each group of movies. The suboptimality gap of the pre-processed data is $\Delta\approx 0.0053$. The number of arms and potential clients are much larger than the synthetic dataset. First, as shown in Fig.~\ref{fig:movielens_perf}, if a small fraction of clients ($M=200$) are used for Fed1-UCB, which can be viewed as only involving a small amount of clients at the beginning of Fed2-UCB, the regret curve trends upward, meaning the global optimal arm is not found due to insufficient client sampling. Oppositely, when all clients are involved, Fed1-UCB converges to the optimal arm but with a large regret, which shows the harm of oversampling. Using Fed2-UCB and $f(p)=200$ with $g(p)=2^p$, a much better performance is achieved, since only the necessary amount of clients are sampled \congr{to capture} the global model faithfully without unnecessary loss. With $f(p)=500$, new clients are admitted \congr{more slowly} but it still outperforms Fed1-UCB.

\section{Related Works}
\label{sec:related}
Federated learning was introduced by \citet{mcmahan2017communication,konevcny2016federated2,konevcny2016federated} 
and has been an active research topic with studies spanning from communication efficiency \cite{konevcny2016federated,sattler2019robust}, security and privacy \cite{geyer2017differentially,bagdasaryan2020backdoor}, fairness \cite{li2020Fair},  to system designs \cite{smith2017federated,bonawitz2019towards}, with successful applications in recommender system \cite{ammad2019federated}, and medical treatment \cite{li2019privacy}. 

Multi-armed bandits \cite{Auer:2002,lattimore2020bandit} is a rich research area with various applications, such as cognitive radio \cite{gai2010learning}, recommender system \cite{li2010contextual, wu2017returning}, and clinical trials \cite{shen2020learning,lee2020contextual,lee2021aistats}. The research of distributed multi-player MAB (MP-MAB) are related to the proposed FMAB but with fundamental differences. The MP-MAB research either considers the ``cooperative''  setting, where players interact with a \textit{common} MAB game and communicate with each other to accelerate learning \cite{landgren2016distributed,landgren2018social,martinez2019decentralized,wang2019distributed}, or the ``competitive'' setting, where there is no explicit communications and solving arm collisions is the fundamental difficulty \cite{liu2010distributed,avner2014concurrent,rosenski2016multi,Boursier2019,Shi2020aistats,bubeck2020non,shi2020no}. \shir{The FMAB framework differs from MP-MAB research in that clients can only access observation signals locally, which may not be sufficient to infer the true rewards and the optimal global arm. This difference results in a more fundamental role of communications.} 

The concept of \emph{federated bandits} has been touched upon by a few recent works but with very different focuses than our work. Strictly IID local models are studied by \citet{li2020federated2,dubey2020differentially}  with a focus on privacy protection. \citet{agarwal2020federated} study regression-based contextual bandits as an example of the federated residual learning framework, which does not generalize to the setting of our work. \citet{zhu2021federated} consider a similar problem as the exact model and focuses on sharing information through gossiping among clients with privacy protection. \citet{shi2021aistats} consider federated bandits with personalization, where the clients play mixed bandit games that incorporate both global and local models.

\section{Conclusion}
In this work, we have developed a general FMAB framework that bridges MAB and FL. In the proposed approximate model, a new source of uncertainty from client sampling was introduced. We proposed the Fed2-UCB algorithm that involves clients in an increasing manner and explores two types of uncertainty simultaneously while balancing the communication loss, which achieves an  $O(\log(T))$ regret. A special case of the exact model was studied with the Fed1-UCB algorithm, which also achieves an order-optimal regret while providing an opportunity to tradeoff the convergence time and number of clients. Experiments with synthetic and real-world datasets proved the effectiveness of the proposed algorithms and corroborated the theoretical analysis.

\section*{Acknowledgements}

The work was partially supported by the US National Science Foundation (NSF) under Grant CNS-2002902, ECCS-2033671 and ECCS-2029978.

\section*{Ethics Statement}
Federated multi-armed bandits represents a novel framework that extends the federated learning principles to the bandit problems. Intellectually, this work broadens the scope of multi-player MAB by proposing global-local (non-IID) bandit model interactions through clients in a communication-efficient and privacy-preserving way. This new FMAB framework is general and may spark other innovations in this field. In addition, two FMAB models are considered that capture different global-local model relationships, and their corresponding algorithms and theoretical analyses may be useful for other similar problems. Practically, this work has the potential to benefit applications with sensitive data collected locally at the clients while a global bandit model is desired to be learned based on (but cannot directly access) the local data, including but not limited to the examples of cognitive radio and recommender systems given in the paper. In both examples, our research will enable the central server to avoid collecting raw data from the end users, hence mitigating risks to privacy and cost of communication. This work also provides a tangible way to leverage massively distributed clients for bandit learning. The authors do not foresee significant disadvantage for the involved parties with the proposed FMAB paradigm. As for the failure of the system, it is less likely to be an issue since the operations are fully distributed among large number of clients, but Fed2-UCB and Fed1-UCB may end up finding sub-optimal arms when it does happen. It is, however, a significant risk if the synchronization is broken, which poses an interesting research problem for future investigation. The FMAB framework also requires reliable communication infrastructure and sufficient computation power at the clients, which can be difficult in some situations (e.g., remote/rural areas or low-cost devices).

\bibliography{ref}

\newpage
\appendix
\onecolumn
\newpage
\appendix

\section{Similarities between FL and FMAB}
Table 1 summarizes the main similarities between the proposed FMAB framework and the core principles in FL. \congr{There is a strong and natural correspondence between these two frameworks, which share many key features.}
\begin{table*}[thb]
	\centering
	\begin{tabular}{ll}
		\toprule[1pt]
		Federated learning     & Federated MAB \\
		\midrule[0.5pt]
		Non-IID local datasets	& Non-IID local bandit games\\
		Massively distributed &  Client sampling in the approximate model\\
		Privacy preservation & Sharing sample means instead of raw samples\\
		Communication efficiency & Controlling the communication loss\\
		Goal: increase global model accuracy & Goal: minimize regret Eqn.~\eqref{eqn:regret_fed}\\
		\bottomrule[1pt]
	\end{tabular}\\
	\caption{Similarities between federated MAB and federated learning}
	\label{tbl:concept_sim}
\end{table*}

\section{Fed1-UCB Algorithm Description}
For completeness, the detailed algorithm descriptions of Fed1-UCB, for both clients and server, are given in Algorithm \ref{alg:fed_local} and \ref{alg:fed_central}, respectively.

\begin{minipage}[t]{.49\textwidth}
	\begin{algorithm}[H]
		\small
		\caption{Fed1-UCB: client $m$}
		\label{alg:fed_local}
		\begin{algorithmic}[1]
			\Require $T$, $K$
			\State Initialize $p\gets 1$; $[K_1]\gets [K]$
			\While{$K_{p}>1$}
			\State \shir{Pull each active arm $k\in [K_p]$ for $f(p)$ times}
			\State \shir{Calculate the local sample means $\bar{\mu}_{k,m}(p),\forall k\in[K_p]$}
			\State Send $\forall k\in[K_p],\bar{\mu}_{k,m}(p)$ to the server
			\State Receive global update set $E_p$ from the server
			\State Update model $[K_{p+1}]\gets[K_p]\backslash E_p$
			\State $p\gets p+1$
			\EndWhile
			\State $F\gets$the only element in $[K_p]$
			\State Stay on arm $F$ until $T$
		\end{algorithmic}
	\end{algorithm}
\end{minipage}%
\hspace{0.01in}
\begin{minipage}[t]{.49\textwidth}
	\begin{algorithm}[H]
		\small
		\caption{Fed1-UCB: central server}
		\label{alg:fed_central}
		\begin{algorithmic}[1]
			\Require $T$, $K$, \shir{$M$}
			\State Initialize $p\gets 1$; $[K_1]\gets [K]$			
			\While{$K_{p}>1$}
			\State Receive local updates $\bar{\mu}_{k,m}(p), \forall k\in[K_p],\forall m\in[M]$
			\State Calculate $\forall k\in[K_p],\bar{\mu}_{k}(p)=\sum_{m=1}^{M}\bar{\mu}_{k,m}(p)/M$
			\State $E_p\gets$ all $k\in[K_p]$ satisfies \Comment{\textit{Arm elimination}}
			$$\bar{\mu}_k(p)+B_{p,1}\leq \max_{l\in[K_p]}\bar{\mu}_l(p)-B_{p,1}$$
			\State Send global update set $E_p$ to all the clients
			\State Update model $[K_{p+1}]\gets[K_p]\backslash E_p$
			\State $p\gets p+1$
			\EndWhile
		\end{algorithmic}
	\end{algorithm}
\end{minipage}

\section{Additional Discussions on the Regret Analysis}
The regrets in Theorem~\ref{thm:regret_2} and Theorem~\ref{thm:regret_1} are related to the choices at the server, i.e., $f(p)$ and $g(p)$. In this section, we provide a more detailed discussion of the impact of these choices on the regret.
\subsection{Discussion for Theorem~\ref{thm:regret_2}}\label{app:dis_regret2}
From Eqns.~\eqref{eqn:m_accuracy}, \eqref{eqn:m_upper} and \eqref{eqn:regret2_ineq}, there are $\Theta(\log(T))$ clients involved in the game eventually, which means that the choice of any $f(p)$ with an order higher than $O(1)$ cannot have an $O(\log(T))$ regret. Also, with $O(\log(T))$ involved clients, a constant communication loss is no longer achievable as shown in Corollary \ref{col:regret_1}. Thus, we focus on choices of $g(p)$ while fixing $f(p)=\kappa$, and the results are given in Table \ref{tbl:regret_Fed2-UCB}. With a linear growth rate $g(p)=\lambda$, we can see that the overall regret is of order $O(\log^2(T))$. While increasing the rate to $g(p)=\left\lceil\lambda\log(T)\right\rceil$, an $O(\log(T))$ regret is achieved but the multiplicative factors are far from optimal. By exponentially increasing involving players with $g(p)=2^p$ or $g(p)=\left\lceil2^p\log(T)\right\rceil$, an exploration loss approaching the lower bound can be achieved, while the communication loss remains sublinear of order $O(\log(T))$.

\begin{table}[htb]
	\centering
	\begin{tabular}{lll}
		\toprule
		$g(p)$        & $p_k$, $k\not=k_*$  & $R_2(T)$\\
		\midrule
		\smallskip
		$\lambda$   & $\left\lceil\frac{96(\sigma/\sqrt{\kappa}+\sigma_c)^2\log(T)}{\lambda(\mu_*-\mu_{k})^2}\right\rceil$ & $O\left(\sum_{k\not=k_*}\frac{\kappa(\sigma/\sqrt{\kappa}+\sigma_c)^4\log^2(T)}{\lambda(\mu_*-\mu_{k})^3}+C\frac{(\sigma/\sqrt{\kappa}+\sigma_c)^4\log^2(T)}{\lambda\Delta^4}\right)$   \\
		\smallskip
		$\left\lceil\lambda\log(T)\right\rceil$   & $\left\lceil\frac{96(\sigma/\sqrt{\kappa}+\sigma_c)^2}{\lambda(\mu_*-\mu_{k})^2}\right\rceil$ & $O\left(\sum_{k\not=k_*}\frac{\kappa(\sigma/\sqrt{\kappa}+\sigma_c)^4\log(T)}{\lambda(\mu_*-\mu_{k})^3}+C\frac{(\sigma/\sqrt{\kappa}+\sigma_c)^4\log(T)}{\Delta^4}\right)$   \\
		\smallskip
		$2^{p}$   & $\left\lceil\log\left(\frac{96(\sigma/\sqrt{\kappa}+\sigma_c)^2\log(T)}{(\mu_*-\mu_{k})^2}\right)\right\rceil$ & $O\left(\sum_{k\not=k_*}\frac{\kappa(\sigma/\sqrt{\kappa}+\sigma_c)^2\log(T)}{(\mu_*-\mu_{k})}+C\frac{(\sigma/\sqrt{\kappa}+\sigma_c)^2\log(T)}{\Delta^2}\right)$   \\
		$\left\lceil2^{p}\log(T)\right\rceil$   & $\left\lceil\log\left(\frac{96(\sigma/\sqrt{\kappa}+\sigma_c)^2}{(\mu_*-\mu_{k})^2}\right)\right\rceil$ & $O\left(\sum_{k\not=k_*}\frac{\kappa(\sigma/\sqrt{\kappa}+\sigma_c)^2\log(T)}{(\mu_*-\mu_{k})}+C\frac{(\sigma/\sqrt{\kappa}+\sigma_c)^2\log(T)}{\Delta^2}\right)$   \\
		\bottomrule
	\end{tabular}\\
	\caption{Regret of Fed2-UCB algorithm with $f(p)=\kappa$ and different choices of $g(p)$. $\lambda$ and $\kappa$ are constants and $\Delta=\min_{k\not=k_*}\{\mu_*-\mu_k\}$ is the suboptimality gap; the $p_k$ column represents its upper bound.}
	\label{tbl:regret_Fed2-UCB}
\end{table}

\subsection{Discussion for Theorem~\ref{thm:regret_1}}\label{app:dis_regret1}
For Theorem \ref{thm:regret_1}, a few possible choices for $f(p)$ with the corresponding $p_k$ and asymptotic regrets are given in Table \ref{tbl:regret_Fed1-UCB}. While $f(p)=\kappa$, the overall asymptotic regret is independent of $M$; however, the communication loss is of order $O(\log(T))$. The choice of $f(p)=\left\lceil\kappa\log(T)\right\rceil$ results in a constant communication loss which scales as ${1}/{\Delta^2}$. When the update period grows exponentially, the communication loss is of order $O(\log(\log(T)))$ for $f(p)=2^p$ and $O(1)$ for $f(p)=\lceil2^p\log(T)\rceil$, and with these two choices, the communication loss now scales as $\log\left({1}/{\Delta}\right)$.

\begin{table}[htb]
	\centering
	\begin{tabular}{lll}
		\toprule
		$f(p)$     & $p_k$, $k\not=k_*$    & $R_1(T)$ \\
		\midrule
		\smallskip
		$\kappa$ & $\left\lceil\frac{96\sigma^2\log(T)}{\kappa M(\mu_*-\mu_k)^2}\right\rceil$  &  $O\left(\sum_{k\not=k_*}\frac{\sigma^2\log(T)}{(\mu_*-\mu_k)}+C\frac{\sigma^2\log(T)}{\kappa \Delta^2}\right)$   \\
		\smallskip
		$\left\lceil\kappa\log(T)\right\rceil$     &  $\left\lceil\frac{96\sigma^2}{\kappa M(\mu_*-\mu_k)^2}\right\rceil$ & $O\left(\sum_{k\not=k_*}\frac{\sigma^2\log(T)}{(\mu_*-\mu_k)}+C\frac{\sigma^2}{\kappa \Delta^2}\right)$      \\
		\smallskip
		$2^{p}$     &  $\left\lceil\log\left(\frac{96\sigma^2\log(T)}{ M(\mu_*-\mu_k)^2}\right)\right\rceil$      &  $O\left(\sum_{k\not=k_*}\frac{\sigma^2\log(T)}{(\mu_*-\mu_k)}+CM\log\left(\frac{\sigma^2\log(T)}{ M\Delta^2}\right)\right)$   \\
		$\left\lceil 2^{p}\log(T)\right\rceil$     & $\left\lceil\log\left(\frac{96\sigma^2}{ M(\mu_*-\mu_k)^2}\right)\right\rceil$       & $O\left(\sum_{k\not=k_*}\frac{\sigma^2\log(T)}{(\mu_*-\mu_k)}+CM\log\left(\frac{\sigma^2}{ M\Delta^2}\right)\right)$  \\
		\bottomrule
	\end{tabular}\\
		\caption{Regret of Fed1-UCB algorithm with different choices of $f(p)$. $\kappa$ is a constant and $\Delta=\min_{k\not=k_*}\{\mu_*-\mu_k\}$ is the suboptimality gap; the $p_k$ column represents its upper bound.}
	\label{tbl:regret_Fed1-UCB}
\end{table}

\section{Proofs for the Regret Analysis}
\subsection{Proof of Theorem~\ref{thm:accuracy}}
\begin{proof}
	With a union bound, we have
	\begin{equation}
	\begin{aligned}
		P_z &=P\left(\hat{\mu}^M_{k_*}\leq\max_{k\not=k_*}\hat{\mu}^M_k\right)= P\left(\bigcup_{k\not=k_*}\left(\hat{\mu}^M_{k_*}\leq\hat{\mu}^M_k\right)\right)\leq  \sum_{k\not=k_*}P\left(\hat{\mu}^M_{k_*}\leq\hat{\mu}_k^\congr{M}\right).
	\end{aligned}
	\label{eqn:appdx1}
	\end{equation}
    For a given arm $k\not=k_*$, we further have
	\begin{equation*}
	\begin{aligned}
	P\left(\hat{\mu}^M_{k_*}>\hat{\mu}^M_k\right)&\geq P\left(\hat{\mu}^M_{k_*}\geq \frac{1}{2}(\mu_k+\mu_*)\geq \hat{\mu}^M_k\right)\\
	& = P\left(\hat{\mu}^M_{k_*}\geq\frac{1}{2}(\mu_k+\mu_*)\right) P\left(\frac{1}{2}(\mu_k+\mu_*)\geq \hat{\mu}^M_k\right)\\
	&\overset{(i)}{\geq} \left(1-\exp\left\{\frac{-M(\mu_*-\mu_k)^2}{8\sigma_c^2}\right\}\right) \left(1-\exp\left\{\frac{-M(\mu_*-\mu_k)^2}{8\sigma_c^2}\right\}\right)\\
	&= 1-O\left(\exp\left\{\frac{-M(\mu_*-\mu_k)^2}{\sigma_c^2}\right\}\right).
	\end{aligned}
	\end{equation*}
	Inequality (i) is because $\hat{\mu}^M_{k_*}$ and $\hat{\mu}^M_k$ are $\frac{\sigma_c}{\sqrt{M}}$-subgaussian random variables. Thus, each term in the summation of Eqn.~\eqref{eqn:appdx1} can be bounded as
	\begin{equation*}
		P\left(\hat{\mu}^M_{k_*}\leq \hat{\mu}^M_k\right)= O\left(\exp\left\{-\frac{M(\mu_*-\mu_k)^2}{\sigma_c^2}\right\}\right).
	\end{equation*}
	Finally Theorem~\ref{thm:accuracy} can be derived as
	\begin{equation*}
			P_z \leq  \sum_{k\not=k_*}P\left(\hat{\mu}^M_{k_*}\leq \hat{\mu}^M_k\right)= O\left(\sum_{k\not=k_*}\exp\left\{-\frac{M(\mu_*-\mu_k)^2}{\sigma_c^2}\right\}\right)= O\left(K\exp\left\{-\frac{M\Delta^2}{\sigma_c^2}\right\}\right).
	\end{equation*}
\end{proof}

\subsection{Proof of Theorem~\ref{thm:regret_2}}
\subsubsection{Step 1: Confidence Bound for the Estimations}
We first analyze the probability guarantee of the interval for the averaged local mean estimation. In the Fed2-UCB algorithm, with two types of uncertainty, the following lemma provides an upper bound for the gap between averaged local means and the exact global means for each arm.
\begin{lemma}
	At phase $p$, for any active arm $k\in[K_p]$, it holds that
	\begin{equation*}
	P\left( \left|\bar{\mu}_k(p)-\mu_k \right|\geq B_{p,2}\right)\leq \frac{4}{T^3}.
	\end{equation*}
\end{lemma}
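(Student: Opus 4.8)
The plan is to decompose the deviation $\bar{\mu}_k(p) - \mu_k$ along the two sources of randomness and bound each piece by a subgaussian tail inequality. Introduce the \emph{exact} averaged local mean $\hat{\mu}_k^{M(p)} \doteq \frac{1}{M(p)}\sum_{m=1}^{M(p)}\mu_{k,m}$, where the sum runs over the $M(p)$ clients present in phase $p$ (and $M(p)$ is deterministic since the $g(q)$ are). Then write
$$\bar{\mu}_k(p) - \mu_k = \underbrace{\left(\bar{\mu}_k(p) - \hat{\mu}_k^{M(p)}\right)}_{\text{arm sampling}} + \underbrace{\left(\hat{\mu}_k^{M(p)} - \mu_k\right)}_{\text{client sampling}}.$$
By the triangle inequality and a union bound, it then suffices to show that each of the two terms exceeds its corresponding summand of $B_{p,2}$ with probability at most $2/T^3$.

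For the client-sampling term, I would use the modeling assumption that the local means $\mu_{k,1},\dots,\mu_{k,M(p)}$ are i.i.d.\ draws from the $\sigma_c$-subgaussian distribution $\phi_k$ with mean $\mu_k$; hence $\hat{\mu}_k^{M(p)} - \mu_k$ is a zero-mean $(\sigma_c/\sqrt{M(p)})$-subgaussian random variable, and the two-sided subgaussian tail bound evaluated at the deviation $\sqrt{6\sigma_c^2\log(T)/M(p)}$ gives exactly $2\exp(-3\log T) = 2/T^3$.

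For the arm-sampling term, I would condition on the realized local means $\{\mu_{k,m}\}$ and argue as follows. A client admitted in batch $q$ (there are $g(q)$ such clients) has, by the end of phase $p$, pulled arm $k$ exactly $f(q)+\cdots+f(p) = F(p)-F(q-1)$ times, so given the local means the centered error $\bar{\mu}_{k,m}(p) - \mu_{k,m}$ of such a client is $(\sigma/\sqrt{F(p)-F(q-1)})$-subgaussian, and these errors are independent across clients because the clients' observation streams are independent. Consequently $\bar{\mu}_k(p) - \hat{\mu}_k^{M(p)} = \frac{1}{M(p)}\sum_{m}(\bar{\mu}_{k,m}(p) - \mu_{k,m})$ is subgaussian with parameter $\sqrt{\frac{\sigma^2}{M(p)^2}\sum_{q=1}^p \frac{g(q)}{F(p)-F(q-1)}} = \sigma\sqrt{\eta_p}$ by the composition rule for weighted sums of independent subgaussians, and the two-sided bound at deviation $\sqrt{6\sigma^2\eta_p\log(T)}$ again yields $2/T^3$; since this bound does not depend on the realized local means it survives the conditioning. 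Union-bounding the two events gives the claimed $4/T^3$.

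The step I expect to require the most care is the measurability issue hidden in the phrase ``active arm $k \in [K_p]$'': the event that $k$ survives to phase $p$ is itself random and determined by past observations, so one might worry that conditioning on it biases the local sample means. The resolution is that the number of pulls allocated to any arm in any phase, $f(\cdot)$, is fixed in advance and independent of the rewards, and the active set only shrinks; hence $\bar{\mu}_{k,m}(p)$ can be viewed as the average of a deterministic number $F(p)-F(q(m)-1)$ of i.i.d.\ observations of arm $k$ by client $m$, well-defined regardless of whether $k$ is eliminated, so the concentration bound holds unconditionally and in particular on the event $\{k\in[K_p]\}$. Everything else is a routine application of the tail bound $P(|X|\ge \sqrt{6\nu^2\log T}) \le 2/T^3$ for a zero-mean $\nu$-subgaussian $X$.
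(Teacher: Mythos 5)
Your proposal is correct and follows essentially the same route as the paper's proof: the same decomposition into an arm-sampling term (bounded via the $\sigma\sqrt{\eta_p}$-subgaussian property of the group-weighted average) and a client-sampling term (bounded via the $\sigma_c/\sqrt{M(p)}$-subgaussian property of $\hat{\mu}_k^{M(p)}$), each controlled at level $2/T^3$ and combined by a union bound. Your explicit treatment of the conditioning on realized local means and on the arm's survival to phase $p$ is a careful refinement of points the paper leaves implicit, not a different argument.
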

\begin{proof}
	The gap between $\bar{\mu}_k(p)$ and $\mu_k$ can be bounded as follows:
	\begin{equation*}
	\begin{aligned}
		&P\left(|\bar{\mu}_k(p)-\mu_k|\geq B_{p,2}\right)\\
		&= P\left(\left|\bar{\mu}_k(p)-\hat{\mu}^{M(p)}_k(p)+\hat{\mu}^{M(p)}_k(p)-\mu_k\right|\geq B_{p,2}\right)\\
		& \leq P\left(\left|\bar{\mu}_k(p)-\hat{\mu}^{M(p)}_k(p)\right|+\left|\hat{\mu}^{M(p)}_k(p)-\mu_k\right|\geq B_{p,2}\right)\\
		& = P\left(\left|\bar{\mu}_k(p)-\hat{\mu}^{M(p)}_k(p)\right|+\left|\hat{\mu}^{M(p)}_k(p)-\mu_k\right|\geq \sqrt{6\sigma^2\eta_p\log(T)}+\sqrt{\frac{6\sigma_c^2\log(T)}{M(p)}}\right)\\
		& \leq P\left( \left|\bar{\mu}_k(p)-\hat{\mu}^{M(p)}_k(p) \right|\geq \sqrt{6\sigma^2\eta_p\log(T)}\right)+P\left(\left|\hat{\mu}^{M(p)}_k(p)-\mu_k\right|\geq\sqrt{\frac{6\sigma_c^2\log(T)}{M(p)}}\right).
	\end{aligned}
	\end{equation*}
	
	For the first part, at phase $p$, the averaged local mean is $\bar{\mu}_k(p)=\frac{1}{M(p)}\sum_{m=1}^{M(p)}\bar{\mu}_{k,m}(p)$, while $\bar{\mu}_{k,m}(p)$ is the sample mean collected by client $m$. It can be observed that arm $k$ is pulled for $\sum_{q=1}^p f(q)=F(p)$ times by $g(1)$ clients (referred as ``group $1$''), $\sum_{q=2}^{p}=F(p)-F(1)$ times by $g(2)$ clients (``group $2$''), and so on until $f(p) = F(p)-F(p-1)$ times by $g(p)$ clients (``group $p$''). We also have that for clients in groups $1$, $\bar{\mu}_{k,m}$ is a $\frac{\sigma}{\sqrt{F(p)}}$-subgaussian random variable, while it is a $\frac{\sigma}{\sqrt{F(p)-F(1)}}$-subgaussian random variable for clients in group $2$, and so on. We further have that the overall average $\bar{\mu}_k(p)$ is a $\frac{\sigma}{M(p)}\sqrt{\sum_{q=1}^p\frac{g(q)}{F(p)-F(q-1)}}$-subgaussian random variable. With the sub-gaussian property and $\eta_p = \frac{1}{M(p)^2}\sum_{q=1}^p\frac{g(q)}{F(p)-F(q-1)}$, it holds that
	\begin{equation*}
		P\left(\left|\bar{\mu}_k(p)-\hat{\mu}^{M(p)}_k(p)\right|\geq \sqrt{6\sigma^2\eta_p\log(T)}\right)\leq 2\exp\left\{-\frac{6\sigma^2\eta_p\log(T)}{2\frac{\sigma^2}{M(p)^2}\sum_{q=1}^p\frac{g(q)}{F(p)-F(q-1)}}\right\}=\frac{2}{T^3}.
	\end{equation*}
	
	 For the second part, $\hat{\mu}^{M(p)}_k(p)=\frac{1}{M(p)}\sum_{m=1}^{M(p)}{\mu_{k,m}}$, which is a $\frac{\sigma_c}{\sqrt{M(p)}}$-subgaussian random variable. Thus, with the subgaussian property, we have
	\begin{equation*}
	\begin{aligned}
	&P\left(\left|\hat{\mu}^{M(p)}_k(p)-\mu_k\right|\geq\sqrt{\frac{6\sigma_c^2\log(T)}{M(p)}}\right)\leq 2\exp\left\{-\frac{\frac{6\sigma_c^2\log(T)}{M(p)}}{2\frac{\sigma_c^2}{M(p)}}\right\}= \frac{2}{T^3}.
	\end{aligned}
	\end{equation*}
	By combining the two parts together, the lemma is proved.
\end{proof}
Denote event $B=\left\{\forall p, \forall k\in[K_p],\left|\bar{\mu}_k(p)-\mu_k\right|\leq B_{p,2}\right\}$ and $P_b=\mathbb{P}(B)$. Since there are at most $T$ rounds and $K$ arms, with a simple union bound, we have
\begin{equation*}
P_b\geq 1-\frac{4K}{T^2}.
\end{equation*}

\subsubsection{Step 2: Required Number of Pulls}
Based on that event $B$ happens, the following lemma provides an upper bound for the required number of pulls to eliminate any sub-optimal arm.
\begin{lemma}\label{lem:round_elm_2}
	Assuming that event $B$ happens, for any sub-optimal arm $k\not=k_*$, there are at most $p_k$ rounds before arm $k$ is eliminated or the overall time runs out, where $p_k$ is the smallest integer satisfying
	\begin{equation*}
	96\left(\sigma\sqrt{\eta_p}+\sigma_c\frac{1}{\sqrt{M(p)}}\right)^2\log(T)\leq(\mu_*-\mu_k)^2.
	\end{equation*}
\end{lemma}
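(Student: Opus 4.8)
The plan is to show that once the phase index $p$ is large enough that the confidence radius $B_{p,2}$ has shrunk below $(\mu_*-\mu_k)/4$, the sub-optimal arm $k$ must land in the elimination set $E_p$ (assuming of course it has not already been removed and the game has not ended). First I would assume event $B$, so that for every phase $p$ and every active arm $l\in[K_p]$ we have $|\bar{\mu}_l(p)-\mu_l|\le B_{p,2}$. In particular, as long as $k_*$ is still active — which it is, since under $B$ the optimal arm is never eliminated (its estimate can never fall a full $2B_{p,2}$ below another arm's estimate, because all estimates are within $B_{p,2}$ of their true means and $\mu_*$ is the largest) — the server's elimination rule compares $\bar{\mu}_k(p)+B_{p,2}$ against $\max_{l\in[K_p]}\bar{\mu}_l(p)-B_{p,2}\ge \bar{\mu}_{k_*}(p)-B_{p,2}$.

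The key chain of inequalities is then: under $B$,
\begin{equation*}
\bar{\mu}_{k_*}(p)-B_{p,2}\ge \mu_*-2B_{p,2},\qquad \bar{\mu}_k(p)+B_{p,2}\le \mu_k+2B_{p,2},
\end{equation*}
so a sufficient condition for $k\in E_p$ is $\mu_k+2B_{p,2}\le \mu_*-2B_{p,2}$, i.e. $4B_{p,2}\le \mu_*-\mu_k=\Delta_k$. Recalling $B_{p,2}=\sqrt{6\sigma^2\eta_p\log T}+\sqrt{6\sigma_c^2\log(T)/M(p)}$ and using $(a+b)^2\le 2(a^2+b^2)$ in reverse — more directly, bounding $B_{p,2}\le \sqrt{6\log T}\,(\sigma\sqrt{\eta_p}+\sigma_c/\sqrt{M(p)})$ — the condition $4B_{p,2}\le\Delta_k$ is implied by $16\cdot 6\log(T)\,(\sigma\sqrt{\eta_p}+\sigma_c/\sqrt{M(p)})^2\le\Delta_k^2$, i.e. exactly the stated inequality $96(\sigma\sqrt{\eta_p}+\sigma_c/\sqrt{M(p)})^2\log(T)\le(\mu_*-\mu_k)^2$. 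Since $\eta_p$ is decreasing in $p$ and $M(p)$ is non-decreasing, the left-hand side is monotone decreasing, so the smallest such $p$ is well defined and equals $p_k$; at every phase $p\ge p_k$ (if reached) arm $k$ is guaranteed eliminated, so it survives at most $p_k$ phases. I would also note the edge case that the while-loop may terminate earlier (when only one arm remains) or the horizon $T$ may be exhausted, which is why the statement says "eliminated or the overall time runs out."

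The only mildly delicate point — and the part I would be most careful about — is verifying the monotonicity of the left-hand side of the phase condition in $p$, since $\eta_p=\frac{1}{M(p)^2}\sum_{q=1}^p\frac{g(q)}{F(p)-F(q-1)}$ has both $p$-dependent summands and a $p$-dependent range; one checks that adding a new group and increasing each $F(p)-F(q-1)$ only decreases each term while $M(p)^2$ grows, so $\eta_p$ decreases, and $M(p)$ is clearly non-decreasing, making $\sigma\sqrt{\eta_p}+\sigma_c/\sqrt{M(p)}$ decreasing. Everything else is the routine union-bound-style argument already set up in Step~1. The main obstacle is thus bookkeeping rather than any real difficulty: keeping straight that $k_*$ stays active under $B$ so that the $\max$ in the elimination rule can be lower-bounded by $\bar\mu_{k_*}(p)$.
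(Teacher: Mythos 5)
Your proposal follows essentially the same route as the paper's proof: under event $B$ you bound $\bar{\mu}_k(p)+B_{p,2}\le\mu_k+2B_{p,2}$ and $\bar{\mu}_{k_*}(p)-B_{p,2}\ge\mu_*-2B_{p,2}$, conclude that $4B_{p,2}\le\Delta_k$ forces $k\in E_p$, and translate $4B_{p_k,2}\le\Delta_k$ into the stated condition via $B_{p,2}=\sqrt{6\log(T)}\,\bigl(\sigma\sqrt{\eta_p}+\sigma_c/\sqrt{M(p)}\bigr)$ (note this is an exact factorization, so no inequality such as $(a+b)^2\le 2(a^2+b^2)$ is needed). Your explicit observation that $k_*$ remains active under $B$, so the maximum in the elimination rule can be lower-bounded by $\bar{\mu}_{k_*}(p)$, is a detail the paper leaves implicit and is worth keeping.

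One caution: the point you single out as delicate---monotonicity of $\sigma\sqrt{\eta_p}+\sigma_c/\sqrt{M(p)}$ in $p$---is not true for arbitrary $f,g$, and your sketch of it overlooks the newly added summand. Passing from $p$ to $p+1$, the sum defining $\eta_{p+1}$ acquires the fresh term $g(p+1)/f(p+1)$, which can dominate: for instance, $g(1)=1$, $f(1)=1000$ gives $\eta_1=10^{-3}$, while admitting $g(2)=3$ clients with $f(2)=1$ gives $\eta_2=\bigl(1/1001+3\bigr)/16\approx 0.19$, so $\eta_p$ (and, when $\sigma_c$ is small, the whole left-hand side of the phase condition) can increase. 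It does decrease for the choices in Corollary~\ref{col:regret_2}, but the lemma is stated for general $f,g$. Fortunately monotonicity is not needed: the lemma only requires that the condition hold at $p=p_k$ itself, so that if arm $k$ is still active when the elimination check of phase $p_k$ is executed, it is removed at that phase---which is exactly what ``at most $p_k$ rounds before elimination or the time runs out'' asserts (and is also how the paper's ``for any $p\ge p_k$'' should be read). With that adjustment your argument is complete and coincides with the paper's.
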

\begin{proof}
	Let $\Delta_k=\mu_*-\mu_k$ be the gap between the sub-optimal arm $k$ and the optimal arm and $p_k$ be the smallest integer such that $4B_{p_k,2}\leq \Delta_k$. Thus $p_k$ is the smallest integer satisfying
	\begin{equation*}
	96\left(\sigma\sqrt{\eta_p}+\sigma_c\frac{1}{\sqrt{M(p)}}\right)^2\log(T)\leq(\mu_*-\mu_k)^2.
	\end{equation*}
	For any $p\geq p_k$, we have
	\begin{equation*}
	\begin{aligned}
	&\Delta_k\geq 4B_{p,2};\\
	&|\bar{\mu}_k(p)-\mu_k|\leq B_{p,2};\\
	&|\bar{\mu}_{k_*}(p)-\mu_*|\leq B_{p,2}.
	\end{aligned}
	\end{equation*}
	With the above inequalities, we have
	\begin{equation*}
	\begin{aligned}
	\bar{\mu}_k(p)+B_{p,2}&\leq \mu_k+2B_{p,2}\leq \mu_k+2B_{p,2}+\bar{\mu}_{k_*}(p)-\mu_*+B_{p,2}\\
	&=-(\Delta_k-4B_{p,2})+\bar{\mu}_{k_*}(p)-B_{p,2}\leq \bar{\mu}_{k_*}(p)-B_{p,2},
	\end{aligned}
	\end{equation*}
	which means arm $k$ is eliminated. Thus, arm $k$ is pulled at most $p_k$ rounds before elimination or the overall time runs out.
\end{proof}

\subsubsection{Step 3: Overall Regret}
When event $B$ holds, the overall regret, denoted as $R_{s,2}(T)$, can be decomposed as
\begin{equation*}
R_{s,2}(T)=\sum_{k=1}^K(\mu_*-\mu_k)\mathbb{E}[N(k)]+C\mathbb{E}\left[\sum_{\tau=1}^{T_c}M_{\gamma_\tau}\right],
\end{equation*}
where $N(k)$ is the overall number of pulls on arm $k$. For the first term, i.e. the exploration loss,  for any sub-optimal arm $k$, with $p_k$ defined in Lemma~\ref{lem:round_elm_2}, we have
\begin{equation*}
(\mu_*-\mu_k)\mathbb{E}[N(k)]\leq (\mu_*-\mu_k)\sum_{p=1}^{p_k}M(p)f(p).
\end{equation*}
The communication loss is similarly determined by $p_{\max}=\max_{k\not=k_*}\{p_k\}$, which satisfies
\begin{equation*}
C\sum_{\tau=1}^{T_c}M_{\gamma_\tau} \leq C\sum_{q=1}^{p_{\max}}M(q).
\end{equation*}
Then $R_{s,2}(T)$ can be bounded as
\begin{equation*}
R_{s,2}(T) \leq \sum_{k\not=k_*}(\mu_*-\mu_k)\sum_{p=1}^{p_k}M(p)f(p)+C\sum_{q=1}^{p_{\max}}M(q).
\end{equation*}

If event $B$ does not hold, with $\beta T$ as an upper bound for the number of clients, the exploration regret and communication regret are upper bounded by a linear loss $\beta T^2$ and $\beta CT^2$ respectively. Thus, the regret of this case, denoted as $R_{f,2}(T)$, can be bounded as
\begin{equation*}
R_{f,2}(T) \leq \beta (1+C)T^2.
\end{equation*}
With $R_{s,2}(T)$ and $R_{f,2}(T)$, the overall regret $R_2(T)$ can be finally bounded as
\begin{align*}
R_2(T)&=P_bR_{s,2}(T)+(1-P_b)R_{f,2}(T)\\
&\leq R_{s,2}(T)+(1-P_b)R_{f,2}(T)\\
&\leq \sum_{k\not=k_*}(\mu_*-\mu_k)\sum_{p=1}^{p_k}M(p)f(p)+C\sum_{q=1}^{p_{\max}}M(q)+4\beta(1+C).
\end{align*}

\subsection{Proof of Theorem~\ref{thm:regret_1}}
\congr{The proof of Theorem~\ref{thm:regret_1} follows similar steps as the proof of Theorem~\ref{thm:regret_2}.}
\subsubsection{Step 1: Confidence Bound for the Estimations}
\begin{lemma}
	At phase $p$, for any active arm $k\in[K_p]$, it holds that
	\begin{equation*}
		P\left(|\bar{\mu}_k(p)-\mu_k|\geq B_{p,1}\right)\leq \frac{2}{T^2}.
	\end{equation*}
\end{lemma}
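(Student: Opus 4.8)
The plan is to show that $\bar{\mu}_k(p)-\mu_k$ is a $\tfrac{\sigma}{\sqrt{MF(p)}}$-subgaussian random variable and then apply the standard two-sided subgaussian tail bound with the specific deviation $t=B_{p,1}$. Unlike the Fed2-UCB case, the exact-model assumption $\mu_k=\tfrac1M\sum_{m=1}^M\mu_{k,m}$ removes the client-sampling bias, so a single concentration argument suffices and there is no second term to union-bound.

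First I would unpack the estimator. In phase $p$, since all $M$ clients are present from the start, client $m$ has pulled active arm $k$ a total of $F(p)=\sum_{q=1}^p f(q)$ times over phases $1,\dots,p$, and $\bar{\mu}_{k,m}(p)$ is the empirical mean of those $F(p)$ i.i.d.\ $\sigma$-subgaussian observations drawn from the local model of mean $\mu_{k,m}$; hence $\bar{\mu}_{k,m}(p)-\mu_{k,m}$ is $\tfrac{\sigma}{\sqrt{F(p)}}$-subgaussian. The server estimate is $\bar{\mu}_k(p)=\tfrac1{M}\sum_{m=1}^M\bar{\mu}_{k,m}(p)$, i.e.\ the empirical mean of all $MF(p)$ observations collected for arm $k$, and these observations are mutually independent across clients and across phases.

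Next, taking expectations and using the exact-model definition gives $\mathbb{E}[\bar{\mu}_k(p)]=\tfrac1M\sum_{m=1}^M\mu_{k,m}=\mu_k$, so $\bar{\mu}_k(p)$ is unbiased for $\mu_k$. Since variance proxies of independent subgaussians add, $\bar{\mu}_k(p)-\mu_k=\tfrac1M\sum_{m=1}^M(\bar{\mu}_{k,m}(p)-\mu_{k,m})$ is $\sqrt{\tfrac1{M^2}\cdot M\cdot\tfrac{\sigma^2}{F(p)}}=\tfrac{\sigma}{\sqrt{MF(p)}}$-subgaussian. The Hoeffding-type subgaussian inequality then gives $P(|\bar{\mu}_k(p)-\mu_k|\ge t)\le 2\exp\{-MF(p)t^2/(2\sigma^2)\}$ for all $t>0$. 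Substituting $t=B_{p,1}=\sqrt{6\sigma^2\log(T)/(MF(p))}$ makes the exponent equal to $3\log(T)$, so the bound becomes $2T^{-3}\le 2T^{-2}$, which is the claim.

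I do not expect a genuine obstacle here; the argument is essentially routine. The only point requiring care is the bookkeeping that $\bar{\mu}_{k,m}(p)$ aggregates all $F(p)$ pulls accumulated through phase $p$ (so the per-client variance proxy is $\sigma^2/F(p)$, not $\sigma^2/f(p)$) and that all $MF(p)$ observations are mutually independent, which is exactly what lets the variance proxies add cleanly. Once the lemma is in hand, a union bound over the at most $T$ phases and $K$ arms gives the Fed1-UCB analogue of event $B$, just as in the Fed2-UCB proof, which then feeds the required-pulls and overall-regret steps.
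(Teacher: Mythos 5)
Your proposal is correct and follows essentially the same route as the paper's proof: establish that $\bar{\mu}_k(p)-\mu_k$ is $\tfrac{\sigma}{\sqrt{MF(p)}}$-subgaussian and apply the two-sided subgaussian tail bound at $t=B_{p,1}$, yielding $2/T^3\le 2/T^2$ exactly as the paper does. Your added bookkeeping (unbiasedness via the exact-model average and independence across clients and phases) only makes explicit what the paper leaves implicit.
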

\begin{proof}
	Since $\bar{\mu}_k(p)=\frac{1}{M}\sum_{m=1}^M\bar{\mu}_{k,m}(p)$ while $\bar{\mu}_{k,m}(p)$ is the sample mean collected by client $m$ through $F(p)$ pulls, which thus is a $\frac{\sigma}{\sqrt{F(p)}}$-subgaussian random variable, $\bar{\mu}_k(p)$ is a $\frac{\sigma}{\sqrt{MF(p)}}$-subgaussian random variable. Thus, with the subgaussian property, we have
	\begin{equation*}
	\begin{aligned}
		&P\left(\left|\bar{\mu}_k(p)-\mu_k\right|\geq B_{p,1}\right)\leq 2\exp\left\{-\frac{MF(p)B_{p,1}^2}{2\sigma^2}\right\}= 2\exp\left\{-\frac{MF(p)\frac{6\sigma^2\log(T)}{MF(p)}}{2\sigma^2}\right\}= \frac{2}{T^3}.
	\end{aligned}
	\end{equation*}
\end{proof}
Denoting event $A=\left\{\forall p, \forall k\in[K_p],|\bar{\mu}_k(p)-\mu_k|\leq B_{p,1}\right\}$ and $P_a=\mathbb{P}(A)$, since there are at most $T$ rounds and $K$ arms, with a simple union bound, we have
\begin{equation*}
P_a\geq 1-\frac{2K}{T^2}.
\end{equation*}

\subsubsection{Step 2: Required Number of Pulls}
\begin{lemma}\label{lem:round_elm_1}
	Assuming that event $A$ happens, for any sub-optimal arm $k\not=k_*$, there are at most $p_k$ rounds before it is eliminated or the overall time runs out, where $p_k$ is the \congr{smallest} integer satisfying
	$$
	MF(p_k)\geq  \frac{96\sigma^2\log(T)}{(\mu_*-\mu_k)^2}.
	$$
\end{lemma}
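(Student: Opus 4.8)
The plan is to mirror the argument of Lemma~\ref{lem:round_elm_2}, specialized to the exact model in which the client‑sampling uncertainty is absent and only $B_{p,1}=\sqrt{6\sigma^2\log(T)/(MF(p))}$ enters. The first step is to observe that the defining condition $MF(p_k)\geq 96\sigma^2\log(T)/(\mu_*-\mu_k)^2$ is nothing more than the algebraic restatement of $4B_{p_k,1}\leq\Delta_k$, where $\Delta_k=\mu_*-\mu_k$: squaring $4B_{p,1}\leq\Delta_k$ gives $96\sigma^2\log(T)/(MF(p))\leq\Delta_k^2$, i.e. $MF(p)\geq 96\sigma^2\log(T)/\Delta_k^2$. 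Since $F(p)=\sum_{q=1}^p f(q)$ is strictly increasing (each $f(q)\geq 1$), $B_{p,1}$ is nonincreasing in $p$, so once $4B_{p_k,1}\leq\Delta_k$ holds it continues to hold for every $p\geq p_k$.

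The second step is to condition on event $A$ and record the three inequalities available at any phase $p\geq p_k$ in which arm $k$ is still active: $\Delta_k\geq 4B_{p,1}$, $|\bar\mu_k(p)-\mu_k|\leq B_{p,1}$, and $|\bar\mu_{k_*}(p)-\mu_*|\leq B_{p,1}$. The last of these presumes $k_*$ is still active, so I would first note that on $A$ the optimal arm is never eliminated: for any active arm $l$ we have $\bar\mu_l(p)\leq\mu_l+B_{p,1}\leq\mu_*+B_{p,1}$, hence $\max_{l\in[K_p]}\bar\mu_l(p)-B_{p,1}\leq\mu_*\leq\bar\mu_{k_*}(p)+B_{p,1}$, so $k_*$ never meets the elimination criterion. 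Consequently $k_*\in[K_p]$ throughout and $\max_{l\in[K_p]}\bar\mu_l(p)\geq\bar\mu_{k_*}(p)$.

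The third step reuses the elimination chain of Lemma~\ref{lem:round_elm_2} essentially verbatim: for $p\geq p_k$, combining the bounds above yields $\bar\mu_k(p)+B_{p,1}\leq\mu_k+2B_{p,1}\leq\bar\mu_{k_*}(p)-(\Delta_k-4B_{p,1})-B_{p,1}\leq\bar\mu_{k_*}(p)-B_{p,1}\leq\max_{l\in[K_p]}\bar\mu_l(p)-B_{p,1}$, so arm $k$ lies in the server's elimination set $E_p$ and is removed at phase $p_k$ if it has not already been removed earlier (or the horizon $T$ is exhausted first). This gives the stated bound of at most $p_k$ rounds. The only ingredients not already present in the proof of Theorem~\ref{thm:regret_2} are the trivial monotonicity of $B_{p,1}$ and the observation that $k_*$ survives on $A$, so I do not anticipate any genuine obstacle — the statement is effectively a degenerate instance of Lemma~\ref{lem:round_elm_2} with $\sigma_c=0$ and a fixed client count.
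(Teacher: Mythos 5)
Your proposal is correct and follows essentially the same argument as the paper's proof: translating the threshold condition into $4B_{p_k,1}\leq\Delta_k$ and then running the same elimination chain $\bar\mu_k(p)+B_{p,1}\leq\mu_k+2B_{p,1}\leq\bar\mu_{k_*}(p)-B_{p,1}$ under event $A$. Your explicit remarks that $B_{p,1}$ is nonincreasing and that $k_*$ is never eliminated on $A$ are details the paper leaves implicit, so they are welcome but do not constitute a different route.
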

\begin{proof}
	Let $\Delta_k=\mu_*-\mu_k$ be the gap between the sub-optimal arm $k$ and the optimal arm, and $p_k$ be the smallest integer such that $4B_{p_k,1}\leq \Delta_k$. We have
	\begin{equation*}
	    MF(p_k)\geq  \frac{96\sigma^2\log(T)}{(\mu_*-\mu_k)^2}.
	\end{equation*}
	If $p_k\leq F^{-1}(T)$, for such $p\geq p_k$, it leads to
	\begin{equation*}
	\begin{aligned}
		&\Delta_k\geq 4B_{p,1};\\
		&|\bar{\mu}_k(p)-\mu_k|\leq B_{p,1};\\
		&|\bar{\mu}_{k_*}(p)-\mu_*|\leq B_{p,1}.
	\end{aligned}
	\end{equation*}
	With the above inequalities, we can further derive
	\begin{equation*}
	\begin{aligned}
	\bar{\mu}_k(p)+B_{p,1}&\leq \mu_k+2B_{p,1}\leq \mu_k+2B_{p,1}+\bar{\mu}_{k_*}(p)-\mu_*+B_{p,1}\\
	&=-(\Delta_k-4B_{p,1})+\bar{\mu}_{k_*}(p)-B_{p,1}\leq \bar{\mu}_{k_*}(p)-B_{p,1},
	\end{aligned}
	\end{equation*}
	which means arm $k$ is eliminated. Thus, arm $k$ is pulled for at most $p_k$ phases by each client before elimination or the overall time runs out.
\end{proof}

\subsubsection{Step 3: Overall Regret}
When event $A$ holds, the overall regret, denoted as $R_{s,1}(T)$, can be decomposed as
\begin{equation*}
	R_{s,1}(T)=\sum_{k=1}^K(\mu_*-\mu_k)\mathbb{E}[N(k)]+\mathbb{E}\left[CMT_c\right],
\end{equation*}
where $N(k)$ is the overall number of pulls on arm $k$ by all the clients. For the first term, i.e. the exploration loss,  with Lemma~\ref{lem:round_elm_1}, for any sub-optimal arm $k$, \congr{if it holds that}
\begin{equation*}
    \frac{96\sigma^2\log(T)}{(\mu_*-\mu_k)^2}\leq MF(p_k)
\end{equation*}
\congr{at round $p_k$,} we can conclude \congr{arm $k$} is eliminated in this round. Thus,
\begin{equation*}
\begin{aligned}
(\mu_*-\mu_k)\mathbb{E}[N(k)]&\leq M(\mu_*-\mu_k)F(p_k).
\end{aligned}
\end{equation*}

Since there are no more communications after the optimal arm is found or the overall time runs out, the communication loss is determined by $p_{\max}=\max_{k\not=k_*}\{p_k\}$ and can be bounded as
\begin{equation*}
	CMT_c \leq CM\min\left\{\max_{k\not=k_*}\{p_k\},F^{-1}(T)\right\} \leq CMp_{\max}.
\end{equation*}
Then $R_{s,1}(T)$ can be bounded as
\begin{equation*}
    R_{s,1}(T) \leq M\sum_{k\not=k_*}(\mu_*-\mu_k)F(p_k)+CMp_{\max}.
\end{equation*}

If event $A$ does not hold, the exploration regret can be simply upper bounded by a linear loss $MT$ while the communication loss is also simply upper bounded linearly by $CMT$. The regret of this case, denoted as $R_{f,1}(T)$, can be bounded as
\begin{equation*}
	R_{f,1}(T) \leq (1+C)MT.
\end{equation*}

With $R_{s,1}(T)$ and $R_{f,1}(T)$, the overall regret $R_1(T)$ can finally be bounded as
\begin{align*}
	R_1(T)&=P_aR_{s,1}(T)+(1-P_a)R_{f,1}(T)\\
	&\leq R_{s,1}(T)+(1-P_a)R_{f,1}(T)\\
	&\leq M\sum_{k\not=k_*}(\mu_*-\mu_k)F(p_k)+CMp_{\max}+2(1+C)MK/T.
\end{align*}

\section{Algorithm Enhancements}
\subsection{Better Privacy Protection}
The clients in Fed1-UCB and Fed2-UCB algorithms are assumed to share complete information about their sample means. Although sharing sample means instead of raw samples from pulling arms already provides some privacy protection, we propose two enhanced schemes to further improve privacy protection while keeping the overall regret performance of the same order.

First, clients can communicate a quantized sample mean while \congr{maintaining correct} arm eliminations at the server. The key idea is that \textit{if the quantization error does not dominate the confidence bound, such errors can be tolerated.} For example, for the Fed1-UCB algorithm, if $\forall k\in[K_p], \forall m\in[M],\mu_{k,m}\in[0,1]$, at phase $p$, the clients can share a quantized sample mean of length $Q_1 =1+\log_2\left(\sqrt{\frac{6\sigma^2\log(T)}{MF(p)}}\right)$, while the server reconstructs the principle of rejection by replacing $B_{p,1}$ with $B_{p,1}'=B_{p,1}+\sqrt{\frac{6\sigma^2\log(T)}{MF(p)}}$. Similarly, for the Fed2-UCB algorithm, the clients can use $Q_2=1+\log_2\left(\sqrt{\frac{6\sigma_c^2\log(T)}{M(p)}}\right)$ while the central server can reform $B_{p,2}$ as $B_{p,2}'=B_{p,2}+\sqrt{\frac{6\sigma_c^2\log(T)}{M(p)}}$. Theoretical analysis shows that such modifications lead to a slightly larger but still constant multiplicative factor, but do not change the overall order of the regret. Also, sharing quantized sample means can reduce the communication cost significantly when massive clients are participating in the FMAB task \cite{konevcny2016federated}.

Second, when there are a large number of clients participating in the FMAB game, each individual client can add a zero-mean random noise to their transmitted sample means, e.g. a zero-mean Gaussian or Laplace noise. In this way, the server can only access a noisy sample mean of each individual client, which protects the sensitive data. At the same time, since the collected sample means from clients are averaged at the server, the arm eliminations remain successful based on the averaged sample means with a high probability due to the concentration inequality.

\section{Experiment Details and Additional Results}
In this section, we provide details of the experimentation that due to space limit were not included in the main paper. All the experimental results presented here and in the main paper are obtained by averaging over $100$ experiments, with error bars reported when applicable.
\begin{figure*}[htb]
    \centering
	\begin{minipage}[t]{0.35\linewidth}
		\centering
		\includegraphics[width=\linewidth]{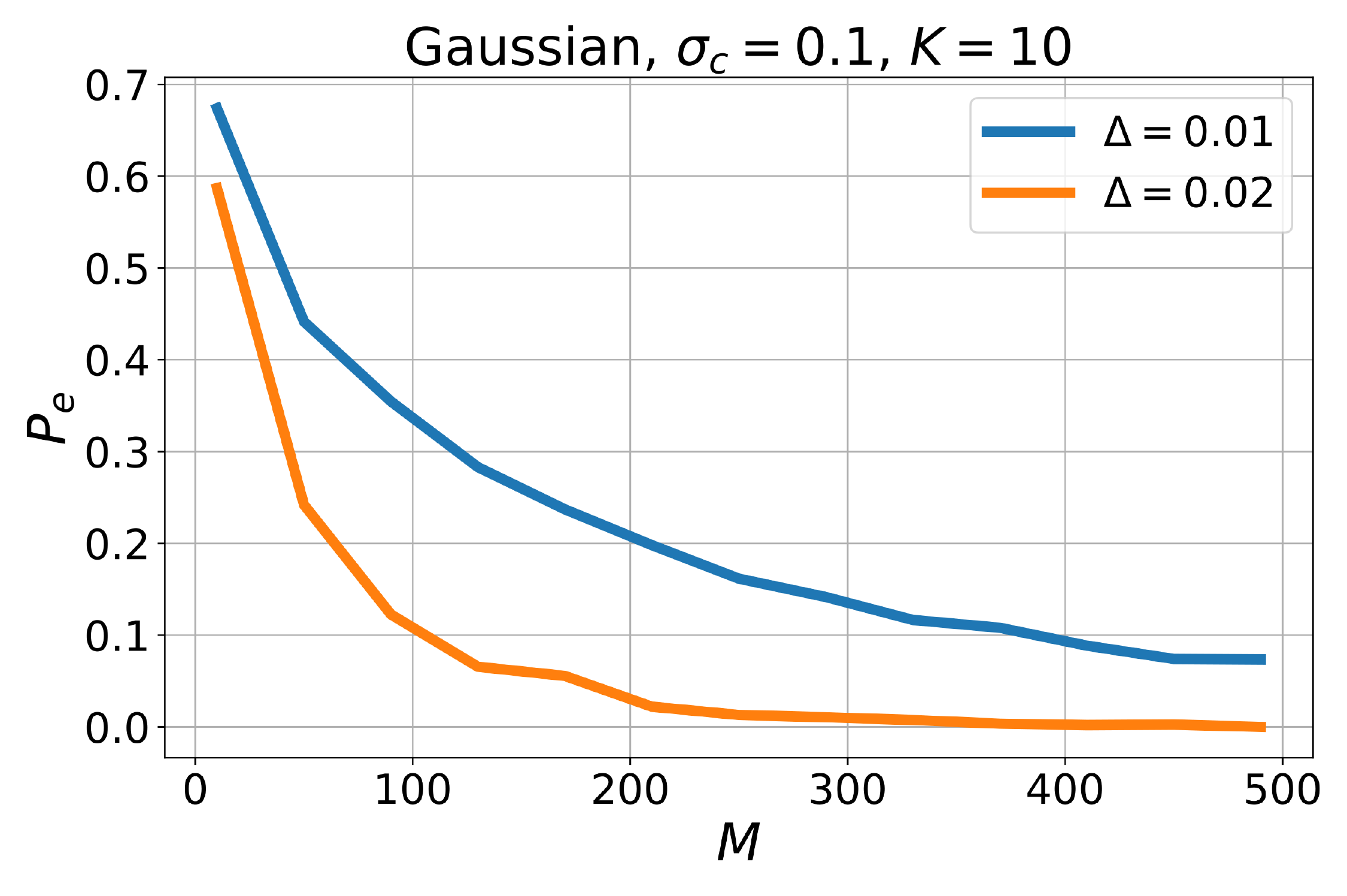}
		\caption{$P_e$ with different $M$.}
		\label{fig:acc_gaussian}
	\end{minipage}
		\begin{minipage}[t]{0.35\linewidth}
    	\centering
    	\includegraphics[width=\linewidth]{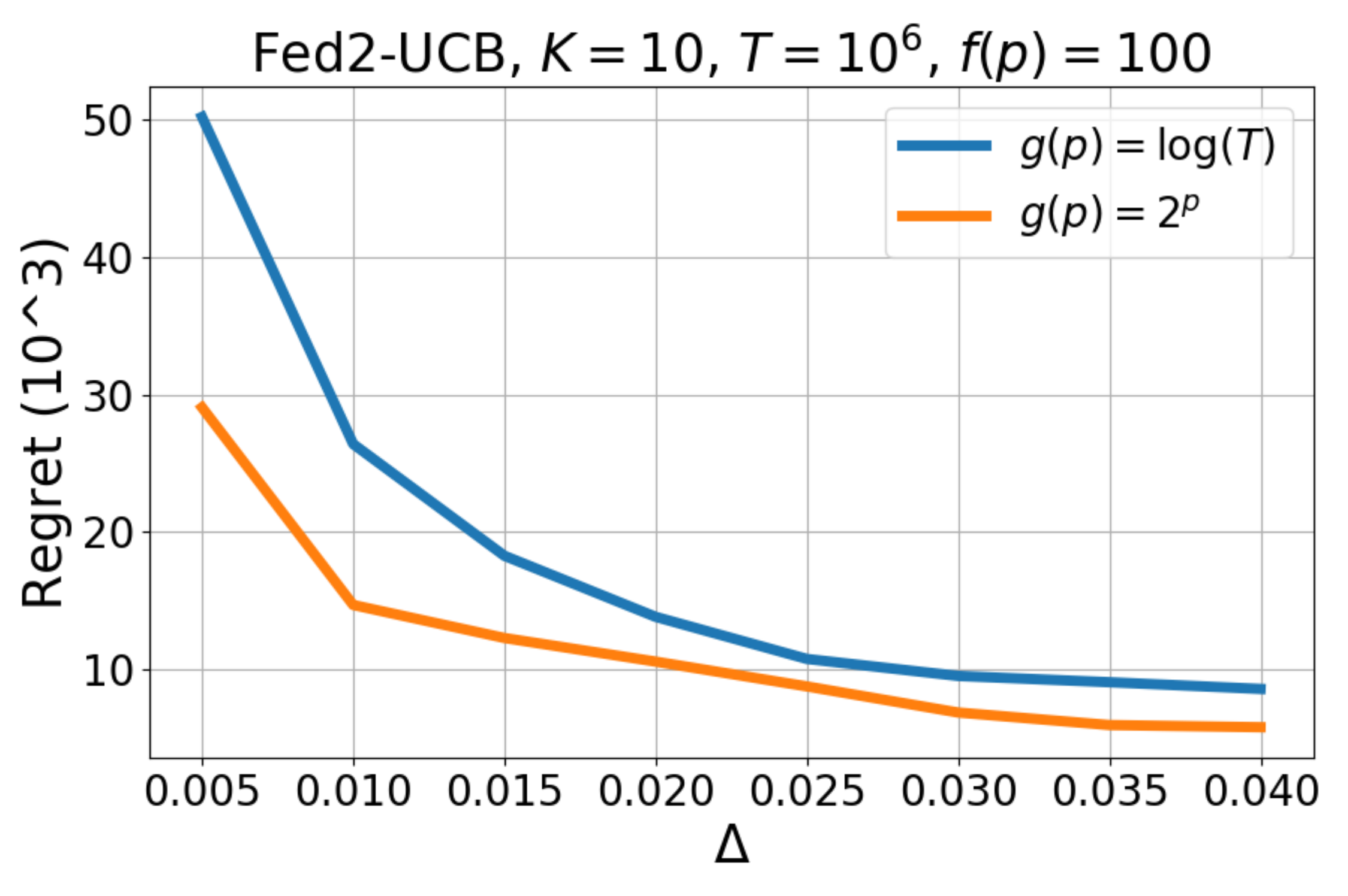}
    	\caption{Fed2-UCB with different $\Delta$.}
    	\label{fig:fed2_ucb_delta}
	\end{minipage}
\end{figure*}

\subsection{Code and Dataset}
The implementation codes of the Fed2-UCB and Fed1-UCB algorithm that we used for simulations have been \congr{made publically available at \url{https://github.com/ShenGroup/FMAB}}, which also contains the synthetic dataset and the pre-processed real-world MovieLens dataset. 
We note that the original version of the MovieLens dataset is publically avaialble at \url{https://grouplens.org/datasets/hetrec-2011/}.

\subsection{Additional Results}
First, the relationship between $P_e$ and $M$ is verified with the synthetic dataset as shown in Fig.~\ref{fig:acc_gaussian}. Using a zero-mean Gaussian distribution with $\sigma_c=0.1$ to generate local models, we can clearly see that $P_e$ exponentially decreases with $M$ with both $\Delta=0.02$ and $\Delta=0.01$. Also, with $\Delta=0.02$, $P_e$ is smaller than the case of $\Delta=0.01$, which corroborates Theorem \ref{thm:accuracy}. 

Then, we focus on the performance of Fed2-UCB with the choice $g(p)=\left\lceil\log(T)\right\rceil$ and $g(p)=2^p$, and report the results in Fig.~\ref{fig:fed2_ucb_delta}. We can see that with $g(p)=2^p$, Fed2-UCB achieves a better regret under various bandit environments. Also, the choice $g(p)=\left\lceil\log(T)\right\rceil$ leads to a stronger dependence on $\Delta$ than $g(p)=2^p$, which coincides with the theoretical analysis in Table \ref{tbl:regret_Fed2-UCB}. 
\end{document}